%% file: preprint.tex
\documentclass{applemlr}

\input{apple_preamble}

\usepackage[utf8]{inputenc}
\usepackage{newunicodechar}
\newunicodechar{�}{\tiny$\blacksquare$}
\usepackage{nicefrac}
\usepackage{ragged2e}
\usepackage{pgfplots}
\pgfplotsset{compat=1.18}
\usepgfplotslibrary{fillbetween}
\usetikzlibrary{calc,arrows.meta,fadings}
\usepackage{tikz-3dplot}
\usepackage{inconsolata}
\usepackage{bbm}
\usepackage{thm-restate}

\input{applecolors}

\input{maths}

\title{Scaling Categorical Flow Maps}

\author[*]{Oscar Davis}
\author{Anastasiia Filippova}
\author{Pierre Ablin}
\author{Victor Turrisi}
\author{Amitis Shidani}
\author{Marco Cuturi}
\author{Louis B\'ethune}

\affiliation{Apple, University of Oxford*}

\abstract{Continuous diffusion and flow matching models could represent a powerful alternative to autoregressive approaches for language modelling (LM), as they unlock a host of advantages currently reserved for continuous modalities, including accelerated sampling and tilting. Recently, several works have demonstrated the possibility of generating discrete data continuously by a simple flow matching process between a Gaussian and the one-hot encoded data distribution. They have further shown the feasibility of accelerated sampling via Categorical Flow Maps (CFMs), resulting in competitive sample quality in the few-step regime. However, this method had only been evaluated at relatively modest scales ($<1$B), leaving the question of its scalability completely open. In this article, we train a $1.7$B-parameter base flow model on $2.1$T tokens and self-distill it into a CFM that generates diverse, high-quality text in as few as $4$ inference steps while maintaining near-data-level token entropy. Furthermore, we introduce a likelihood bound for CFMs in the semi-discrete setting, and show that they can be used to score the model on standard LM benchmarks, achieving results in the same range as discrete diffusion methods. Finally, we uncover some of the challenges that arise from training these models at scale, and we provide prescriptive insights on loss weighting and time scheduling.}

\metadata[Correspondence]{\sffamily Oscar Davis: \href{mailto:oscar.davis@cs.ox.ac.uk}{oscar.davis@cs.ox.ac.uk}, Louis B\'ethune: \href{mailto:l_bethune@apple.com}{l\_bethune@apple.com}}
\metadata[Contributions]{\sffamily Work done while OD was an intern at Apple}
\date{\sffamily May 8, 2026}

\newtheorem{theorem}{Theorem}[section]
\newtheorem{lemma}[theorem]{Lemma}
\newtheorem{proposition}[theorem]{Proposition}

\definecolor{propbg}{RGB}{241,239,248}
\definecolor{propaccent}{HTML}{2B3D50}
\tcolorboxenvironment{proposition}{
  enhanced, breakable,
  colback=propbg, colframe=propbg,
  boxrule=0pt, arc=8pt,
  left=10pt, right=10pt, top=8pt, bottom=8pt,
}
\tcolorboxenvironment{theorem}{
  enhanced, breakable,
  colback=propbg, colframe=propbg,
  boxrule=0pt, arc=8pt,
  left=10pt, right=10pt, top=8pt, bottom=8pt,
}
\tcolorboxenvironment{definition}{
  enhanced, breakable,
  colback=propbg, colframe=propbg,
  boxrule=0pt, arc=8pt,
  left=10pt, right=10pt, top=8pt, bottom=8pt,
}

\newtcolorbox{propbox}{
  enhanced, breakable,
  colback=propbg, colframe=propbg,
  boxrule=0pt, arc=8pt,
  left=10pt, right=10pt, top=8pt, bottom=8pt,
  before upper={},
}

\definecolor{samplebg}{RGB}{254,251,244}
\definecolor{sampleframe}{RGB}{208,202,229}
\definecolor{sampletitle}{RGB}{62,46,101}
\tcbset{
  samplestyle/.style={
    enhanced,
    colback=samplebg,
    colframe=samplebg,
    coltitle=sampletitle,
    colbacktitle=ApplePurple2!40!white,
    fonttitle=\sffamily\bfseries,
    boxrule=0pt,
    arc=6pt,
    outer arc=6pt,
    left=6pt,
    right=6pt,
    top=6pt,
    bottom=6pt,
    toptitle=6pt,
    bottomtitle=6pt,
    titlerule=0.8pt,
    titlerule style={sampleframe},
  }
}
\newtcolorbox{samplebox}[1][]{
  samplestyle,
  before upper={\small\ttfamily\justifying\rightskip=0pt plus 2em\emergencystretch=2em\hyphenpenalty=0\exhyphenpenalty=0\parskip=8pt},
  title={#1},
}

\newtcolorbox{promptbox}[1][PROMPT]{
  enhanced,
  colback=ApplePurple2!40!white,
  colframe=ApplePurple2!40!white,
  boxrule=0pt,
  arc=3pt,
  left=6pt, right=6pt, top=4pt, bottom=4pt,
  fontupper=\small\ttfamily,
  before skip=5.5pt, after skip=5.5pt,
  title={#1},
  fonttitle=\footnotesize\sffamily\bfseries,
  coltitle=ApplePurple7,
  colbacktitle=ApplePurple2!40!white,
  attach title to upper={\\[2pt]},
}
\newtcolorbox{completionbox}[1][COMPLETION]{
  enhanced,
  colback=AppleYellow2!60!white,
  colframe=AppleYellow2!60!white,
  boxrule=0pt,
  arc=3pt,
  left=6pt, right=6pt, top=4pt, bottom=4pt,
  fontupper=\small\ttfamily,
  before skip=5.5pt, after skip=5.5pt,
  title={#1},
  fonttitle=\footnotesize\sffamily\bfseries,
  coltitle=AppleYellow7,
  colbacktitle=AppleYellow2!60!white,
  attach title to upper={\\[2pt]},
}


\tcbset{
  vrbboxbase/.style={
    enhanced,
    colback=AppleYellow2!60!white,
    colframe=AppleYellow2!60!white,
    boxrule=0pt,
    arc=3pt,
    left=6pt, right=6pt, top=4pt, bottom=4pt,
    before skip=5.5pt, after skip=5.5pt,
    fonttitle=\footnotesize\sffamily\bfseries,
    coltitle=AppleYellow7,
    colbacktitle=AppleYellow2!60!white,
    attach title to upper={\par\vspace{1.5pt}},
  }
}
\newenvironment{nfebox}[3]{%
  \def\tcbvrbtitle{NFE #1 \hfill Entropy: #2 \quad Gen PPL: #3}%
  \tcbverbatimwrite{\jobname.vrb}%
}{%
  \endtcbverbatimwrite%
  \begin{tcolorbox}[vrbboxbase, title={\tcbvrbtitle}, height from=0pt to 4cm]%
  \begingroup
  \footnotesize\ttfamily\raggedright
  \setlength{\parindent}{0pt}%
  \setlength{\parskip}{0pt}%
  \obeylines\obeyspaces\input{\jobname.vrb}%
  \endgroup
  \end{tcolorbox}%
}
\newenvironment{nfeboxfull}[3]{%
  \def\tcbvrbtitle{NFE #1 \hfill Entropy: #2 \quad Gen PPL: #3}%
  \tcbverbatimwrite{\jobname.vrb}%
}{%
  \endtcbverbatimwrite%
  \begin{tcolorbox}[vrbboxbase, title={\tcbvrbtitle}]%
  \begingroup
  \footnotesize\ttfamily\raggedright
  \setlength{\parindent}{0pt}%
  \setlength{\parskip}{0pt}%
  \obeylines\obeyspaces\input{\jobname.vrb}%
  \endgroup
  \end{tcolorbox}%
}
\newenvironment{deltapplbox}[2]{%
  \def\tcbvrbtitle{NFE #1 \hfill \deltappl: #2}%
  \tcbverbatimwrite{\jobname.vrb}%
}{%
  \endtcbverbatimwrite%
  \begin{tcolorbox}[vrbboxbase, title={\tcbvrbtitle}, height from=0pt to 4cm]%
  \begingroup
  \footnotesize\ttfamily\raggedright
  \setlength{\parindent}{0pt}%
  \setlength{\parskip}{0pt}%
  \obeylines\obeyspaces\input{\jobname.vrb}%
  \endgroup
  \end{tcolorbox}%
}
\newenvironment{methodbox}[2]{%
  \def\tcbvrbtitle{#1 \hfill \deltappl: #2}%
  \tcbverbatimwrite{\jobname.vrb}%
}{%
  \endtcbverbatimwrite%
  \begin{tcolorbox}[vrbboxbase, title={\tcbvrbtitle}, height from=0pt to 5cm]%
  \begingroup
  \footnotesize\ttfamily\raggedright
  \setlength{\parindent}{0pt}%
  \setlength{\parskip}{0pt}%
  \obeylines\obeyspaces\input{\jobname.vrb}%
  \endgroup
  \end{tcolorbox}%
}

\begin{document}

\maketitle
\input{sections/introduction}
\input{sections/background}
\input{sections/method}
\input{sections/experiments}
\input{sections/related}
\input{sections/limitations}
\input{sections/conclusion}

\bibliographystyle{apalike}
\bibliography{ref}

\clearpage
\appendix
\crefalias{section}{appendix}
\crefalias{subsection}{appendix}
\crefalias{subsubsection}{appendix}

\input{sections/appendix}

\end{document}

%% file: apple_preamble.tex
\usepackage{amsmath}
\usepackage{enumerate}
\usepackage{algorithm}
\usepackage{algpseudocode}
\usepackage{amsfonts}
\usepackage{amsthm}
\usepackage{cleveref}
\usepackage{diagbox}
\usepackage{colortbl}
\usepackage{amssymb}
\usepackage{xspace}
\usepackage{wrapfig}
\usepackage{adjustbox}
\usepackage{tabularx}
\usepackage{booktabs}
\usepackage{mathtools}
\usepackage{tikz}
\usepackage{enumitem}
\usepackage{silence}
\usepackage{dsfont}
\usepackage[table]{xcolor}
\usepackage[dvipsnames]{xcolor}
\usepackage{multirow}
\usepackage{makecell}
\usepackage{xfakebold}

\definecolor{textgray}{HTML}{6E6E73}
\usetikzlibrary{positioning, calc}
\usetikzlibrary{decorations.pathmorphing}

\makeatletter
\patchcmd{\wrong@fontshape}{\@gobbletwo}{}{}{}
\makeatother
\numberwithin{equation}{section}
\makeatletter
\AtBeginDocument{
  \urlstyle{sf}
  
}
\makeatother

\definecolor{light}{RGB}{125, 125, 125}
\crefname{tcb@cnt@pbox}{code}{code}
\Crefname{tcb@cnt@pbox}{Code}{Code}
\crefname{assumption}{assumption}{assumption}
\Crefname{assumption}{Assumption}{Assumptions}

\newtcolorbox[auto counter]{pbox}[2][]{
  colback=white,
  title=Code~\thetcbcounter: #2,
  #1,fonttitle=\sffamily,
  fontupper=\sffamily,
  arc=2pt,
  colframe=bgcolor,
  coltitle=fgcolor,
  colbacktitle=bgcolor,
  toptitle=0.25cm,
  bottomtitle=0.125cm
}

\makeatletter
\newcommand\applefootnote[1]{%
  \begingroup
  \renewcommand\thefootnote{}%
  \renewcommand\@makefntext[1]{\noindent##1}%
  \footnote{#1}%
  \addtocounter{footnote}{-1}%
  \endgroup
}
\makeatother

\definecolor{cverbbg}{gray}{0.90}

%% file: applecolors.tex
\definecolor{AppleWhite}{RGB}{255,255,255}
\definecolor{ApplePrimaryCoolGray}{RGB}{116,128,139}
\definecolor{AppleCoolGray1}{RGB}{199,209,214}
\definecolor{AppleCoolGray2}{RGB}{147,174,190}
\definecolor{AppleCoolGray3}{RGB}{124,147,160}
\definecolor{AppleCoolGray4}{RGB}{92,102,109}
\definecolor{AppleCoolGray5}{RGB}{78,93,100}
\definecolor{AppleCoolGray6}{RGB}{53,60,65}
\definecolor{AppleBlack}{RGB}{0,0,0}
\definecolor{AppleSecondaryChartGray}{RGB}{168,168,168}
\definecolor{AppleChartGray2}{RGB}{233,233,233}
\definecolor{AppleChartGray3}{RGB}{211,211,211}
\definecolor{AppleChartGray4}{RGB}{190,190,190}
\definecolor{AppleChartGray5}{RGB}{140,140,140}
\definecolor{AppleChartGray6}{RGB}{102,102,102}
\definecolor{AppleChartGray7}{RGB}{64,64,64}
\definecolor{ApplePrimaryChartBlue}{RGB}{84,151,193}
\definecolor{AppleBlue2}{RGB}{212,229,239}
\definecolor{AppleBlue3}{RGB}{169,202,223}
\definecolor{AppleBlue4}{RGB}{127,177,209}
\definecolor{AppleBlue5}{RGB}{71,130,166}
\definecolor{AppleBlue6}{RGB}{55,99,128}
\definecolor{AppleBlue7}{RGB}{45,72,89}
\definecolor{ApplePrimaryChartGreen}{RGB}{83,172,121}
\definecolor{AppleGreen2}{RGB}{212,234,221}
\definecolor{AppleGreen3}{RGB}{169,213,188}
\definecolor{AppleGreen4}{RGB}{126,193,155}
\definecolor{AppleGreen5}{RGB}{58,140,82}
\definecolor{AppleGreen6}{RGB}{39,102,54}
\definecolor{AppleGreen7}{RGB}{29,58,31}
\definecolor{ApplePrimaryChartYellow}{RGB}{253,195,93}
\definecolor{AppleYellow2}{RGB}{254,240,214}
\definecolor{AppleYellow3}{RGB}{254,224,174}
\definecolor{AppleYellow4}{RGB}{254,210,134}
\definecolor{AppleYellow5}{RGB}{230,168,69}
\definecolor{AppleYellow6}{RGB}{191,131,46}
\definecolor{AppleYellow7}{RGB}{153,107,54}
\definecolor{ApplePrimaryChartOrange}{RGB}{250,151,92}
\definecolor{AppleOrange2}{RGB}{254,229,214}
\definecolor{AppleOrange3}{RGB}{252,203,173}
\definecolor{AppleOrange4}{RGB}{252,178,133}
\definecolor{AppleOrange5}{RGB}{227,121,68}
\definecolor{AppleOrange6}{RGB}{191,87,46}
\definecolor{AppleOrange7}{RGB}{143,59,36}
\definecolor{ApplePrimaryChartRed}{RGB}{227,94,105}
\definecolor{AppleRed2}{RGB}{248,215,217}
\definecolor{AppleRed3}{RGB}{241,174,180}
\definecolor{AppleRed4}{RGB}{234,135,143}
\definecolor{AppleRed5}{RGB}{196,63,77}
\definecolor{AppleRed6}{RGB}{153,35,53}
\definecolor{AppleRed7}{RGB}{102,19,43}
\definecolor{ApplePrimaryChartPurple}{RGB}{161,150,204}
\definecolor{ApplePurple2}{RGB}{231,228,242}
\definecolor{ApplePurple3}{RGB}{208,202,229}
\definecolor{ApplePurple4}{RGB}{185,176,217}
\definecolor{ApplePurple5}{RGB}{128,113,171}
\definecolor{ApplePurple6}{RGB}{89,76,128}
\definecolor{ApplePurple7}{RGB}{62,46,101}
\definecolor{AppleCoolGray}{RGB}{116,128,139}
\definecolor{AppleChartGray}{RGB}{168,168,168}
\definecolor{AppleBlue}{RGB}{84,151,193}
\definecolor{AppleGreen}{RGB}{83,172,121}
\definecolor{AppleYellow}{RGB}{253,195,93}
\definecolor{AppleOrange}{RGB}{250,151,92}
\definecolor{AppleRed}{RGB}{227,94,105}
\definecolor{ApplePurple}{RGB}{161,150,204}

%% file: maths.tex
\newcommand{\R}{\mathbb{R}}
\newcommand{\N}{\mathbb{N}}
\newcommand{\E}{\mathbb{E}}
\newcommand{\diff}{\mathrm{d}}
\newcommand{\stopgrad}{\mathop{\mathrm{sg}}}
\newcommand{\argmax}{\mathop{\mathrm{argmax}}}
\newcommand{\KL}[2]{\operatorname{KL}\!\left(#1 \,\middle\|\, #2\right)}
\newcommand{\JSD}[2]{\operatorname{JSD}\!\left(#1 \,\middle\|\, #2\right)}

\newcommand{\mauve}{MAUVE$\star$\xspace}
\newcommand{\deltappl}{$\Delta$ PPL\xspace}

%% file: sections/introduction.tex
\section{Introduction}
Although in language modelling the dominant paradigm has remained autoregressive (AR), diffusion models~\citep{sohldickstein2015deepunsupervisedlearning,ddpm} and flow matching~\citep{flowmatching,stochasticinterpolants} were responsible for large breakthroughs on continuous modalities. Driven by their success, a line of work has sought to extend diffusion to discrete distributions, an approach coined \emph{discrete diffusion}~\citep{d3pm,argmaxflows,campbell2022ctmc,mdlm,shi2024maskeddiffusion,discreteflowmatching}. The hope of these models was to extend the rich and principled diffusion toolbox to discrete data, including guidance~\citep{cfg}, distribution tilting~\citep{adjointmatching}, and accelerated sampling~\citep{consistencymodels,song2023improvedtechniquestrainingconsistency}. However, discrete diffusion methods evolve on the discrete grid, preventing the usage of the well-established continuous techniques.

\begin{figure}[t]
    \centering
    \includegraphics[width=\textwidth]{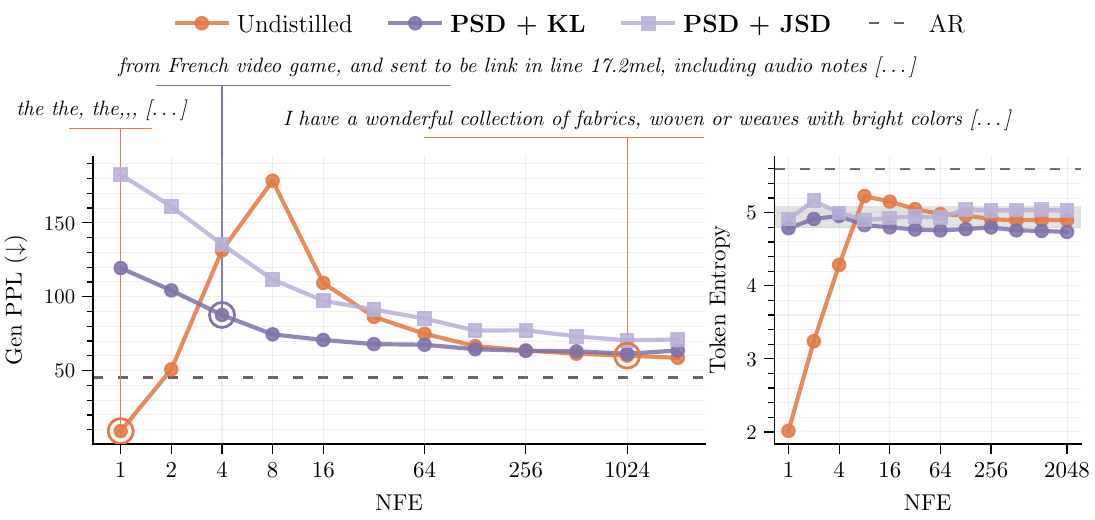}
    \caption{(Left) Gen PPL vs. NFE. (Right) Token entropy vs. NFE. The gray band indicates the token entropy of the data distribution. Above the circled points are provided short qualitative samples that reflect the generation quality at that NFE regime. Self-distillation enables high-quality, diverse generation at significantly fewer function evaluations.}
    \label{fig:sd_flowmap_pareto}
\end{figure}

A parallel line of work developed continuous-state processes to generate discrete data, either on the simplex~\citep{dirichletflowmatching,fisherflowmatching,categoricalflowmatching} or directly in $\R^d$~\citep{vfm}. Recently,~\citet{cfm,fmlm,dfm} scaled these ideas to common discrete diffusion benchmarks, and showed that these methods are competitive, but also succeeded in porting flow map matching~\citep{flowmapmatching,howtobuildconsistencymodel} to discrete data, enabling high quality few-step sampling. We refer to these methods as \emph{Categorical Flow Maps} (CFMs). If successful at scale, CFMs could unlock the flow map toolbox for steering and fine-tuning~\citep{diamondmaps,metaflowmaps}, trading compute for sample quality rather than merely generating plausible outputs.

Nonetheless, it remains unclear whether CFMs can scale to the billion-parameter, trillion-token regime. For a token vocabulary $V$, and context length $L$, they require materialising $L \times \lvert V\rvert$ logit matrices, where $\lvert V\rvert$ can easily attain $200{,}000$ tokens in modern LLMs, imposing a significant memory and computational overhead.

In this work, we show that CFMs scale competitively to this setting. Our contributions are as follows:
\begin{itemize}
    \item We explore the CFM design space via $350$+ configurations at $400$M proxy scale, identifying robust hyperparameter settings that inform the full-scale run (\Cref{sec:method}).
    \item We train a $1.7$B-parameter flow matching model on $2.1$T tokens, directly comparing to masked and uniform diffusion~\citep{scalingbeyondmaskeddiffusion} under matched compute (\Cref{sec:experiments}).
    \item We self-distill the pretrained flow into a CFM, improving the quality vs. NFE Pareto front in the few-step regime (\Cref{fig:sd_flowmap_pareto}).
    \item To enable likelihood-based evaluation, we provide a semi-discrete ELBO, and report results on MCQA benchmarks in similar ranges as other non-autoregressive methods (\Cref{sec:elbo}).
\end{itemize}
While we uncover some of the limitations of the method as well, we demonstrate the viability of CFMs at this scale.

%% file: sections/background.tex
\section{Background}\label{sec:background}
We denote by $V$ the vocabulary of our target distribution, $p_\mathrm{data}$, which has support $V^L$ for a context length $L \in \N^\star$. For parameters $\theta$, $p_\theta$ refers to our model distribution.  

\subsection{Autoregressive Modelling}
AR models approach language modelling by decomposing the joint distribution over a sequence into a product of conditional distributions using the chain rule of probability:
\begin{equation}
    p_{\mathrm{data}}(x_{1:L})
    =
    \prod_{i=1}^{L}
    p(x_i \mid x_{<i}),
\end{equation}
where $x_{<i} = (x_1,\dots,x_{i-1})$ denotes the prefix preceding token $x_i$. This factorisation converts the problem of modelling a sequence into a next-token prediction problem. A parametric model $p_\theta$ is trained to approximate these conditional distributions by maximising the likelihood of observed sequences in the training corpus.

\subsection{Stochastic Interpolants}
\begin{wrapfigure}{r}{0.55\textwidth}
\vspace{-4em}
\centering
\includegraphics[width=\linewidth,trim=0 35 0 25,clip]{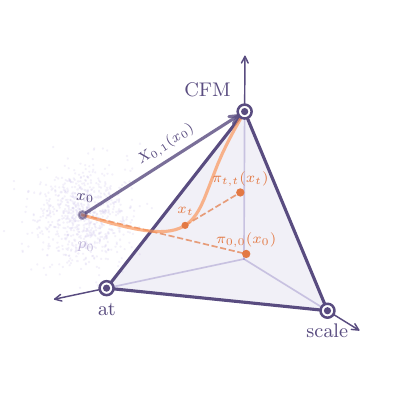}
\caption{Categorical Flow Maps on $\Delta^{d}$, each vertex corresponding to a token. The \textcolor{AppleOrange5}{ODE trajectory} continuously transports noisy $x_0$ towards data. The flow map \textcolor{ApplePurple6}{$X_{0,1}(x_0)$} traverses the path in one step.}
\label{fig:simplex_cfm}
\vspace{0.8em}
\end{wrapfigure}
We model the generative problem as a dynamic transport of measure~\citep{flowmatching,stochasticinterpolants,rectifiedFlow}. Starting from a prior $p_0$, we evolve the distribution through the following ordinary differential equation (ODE) on the sample trajectory level:
\begin{equation}
    \dot x_t = b_t(x_t),\qquad x_0\sim p_0
\end{equation}
for some drift $b:[0,1]\times\R^d\to\R^d$ chosen so that $\mathrm{Law}(x_1) = p_1 \equiv p_\mathrm{data}$, enabling sampling from $p_1$ by integrating the ODE. To this end, consider a \emph{stochastic interpolant}, which is a map $I$ defined by
\begin{equation}
I_t(x_0, x_1) = (1-\alpha_t) x_0 + \alpha_tx_1,
\end{equation}
satisfying $I_0(x_0, x_1) = x_0\sim p_0$ and $I_1(x_0, x_1)=x_1\sim p_1$ for some differentiable schedule $\alpha$. It defines a probability path between the two distributions of interest. By defining the drift as
\begin{equation}
    \forall 0 \leq t \leq 1,\qquad b_t(x) = \E\left[\dot I_t \mid I_t = x\right] = \E\Big[\dot \alpha_t x_1 - \dot\alpha_t x_0 \mid I_t = x\Big],
\end{equation}
we can ensure that $\mathrm{Law}(x_t) = \mathrm{Law}(I_t)$ at all times, and in particular at $t = 0$ and $t = 1$. Thus, learning to generate from the data distribution, $p_1$, reduces to learning the drift:
\begin{equation}
\label{eq:drift_regression}
\min_\theta \int_0^1 \E\left[\left\lVert v^\theta_t(I_t) - \dot I_t \right\rVert_2^2\right]\diff t,
\end{equation}
where $v^\theta$ is our modelled vector field, and the expectation is taken over $I_t$. We can also directly model an endpoint \emph{denoiser}, since $v_t(x) = \frac{\E[x_1\mid I_t = x] - x}{1-\alpha_t}$.

\subsection{Flow Map Matching}
Integrating the learnt vector field can be costly, requiring many function evaluations (NFEs) to obtain high quality samples. Instead, it is possible to learn a flow map, $X:[0,1]^2 \times \R^d \to \R^d$, which brings any solution of the probability flow from any time $s$ to another time $t$~\citep{flowmapmatching,howtobuildconsistencymodel,meanflows}. Formally, the flow map satisfies the following \emph{jump condition}:
\begin{equation}
\forall 0 \leq s, t \leq 1,\qquad X_{s,t}(x_s) = x_t.
\end{equation}
Reparameterising $X$ as
\begin{equation}
    X_{s,t}(x) = x + (t-s)v_{s,t}(x)
\end{equation}
for some two-time vector field $v:[0,1]^2\times \R^d\to\R^d$, we can characterise the flow map via three key identities, the presentation of which we defer to~\autoref{app:flowmaps}. Alongside those, it can be shown that
\begin{equation}
    \forall 0 \leq t \leq 1,\qquad v_{t,t}(x) = b_t(x),
\end{equation}
which relates the flow map to the drift of the probability flow. By enforcing simultaneously this tangent condition through~\eqref{eq:drift_regression} together with one of the consistency characterisations, we can learn the unique flow map of the ODE by modelling $v_{s,t}^\theta$. Sampling becomes a single pass: $X_{0,1}(x_0)\sim p_1$.

\paragraph{Categorical Flow Maps.} Introduced in~\citet{cfm,fmlm,dfm}, CFMs extend the flow map matching framework to the discrete case. Reparameterise the two-time vector field as
\begin{equation}
    v_{s,t}(x) = \frac{\pi_{s,t}(x) - x}{1-s}
\end{equation}
where $\pi:[0,1]^2\times\R^d\to\R^d$ is a ``partial'' denoiser. The tangent condition requires enforcing~\eqref{eq:drift_regression}, which reduces to a simple cross-entropy loss:
\begin{equation}
\label{eq:cross_ent_diag}
\mathcal{L}_\mathrm{diag}(\theta) \coloneqq -\E\left[\sum_{k=1}^{\lvert V\rvert}I_1^{(k)}\log \pi_{t,t}^{\theta,(k)}(I_t) \right],
\end{equation}
and the self-distillation objectives become analogous identities but on the endpoint prediction level~\citep{fmlm,dfm}. We consider in this work the empirically best-performing losses. The first one is the progressive self-distillation loss (PSD):
\begin{equation}
\label{eq:psd}
\mathcal{L}_{\mathrm{PSD}}(\theta) = \E\Big[\KL{\alpha_{s,u,t}\pi^\theta_{s,u}(I_s) + (1-\alpha_{s,u,t})\pi^\theta_{u,t}(X^\theta_{s,u}(I_s))}{\pi_{s,t}^\theta(I_s)}\Big],
\end{equation}
with $\alpha_{s,u,t} \in [0,1]$, which effectively amounts to a continuous-time shortcut model~\citep{shortcutmodels}. The second objective is the stabilised Eulerian-logit self-distillation (ESD)~\citep{dfm}:
\begin{equation}
\label{eq:esd}
\mathcal{L}_{\mathrm{ESD}}(\theta)=\E\Big[\KL{T^\theta_{s,t}(I_s)}{\pi_{s,t}^\theta(I_s)}\Big],
\end{equation}
where $T^\theta:[0,1]^2\times\R^d \to \R^d$ is the teacher distribution, given by
\begin{equation}
T^\theta_{s,t}(I_s) \coloneqq \mathrm{softmax}\Big( z_{s,s}^\theta(I_s) - \log\left((1-t)\mathbf{1} -(1-s)(t-s)\delta_{s,t}^\theta(I_s)\right) \Big),
\end{equation}
with $\delta_{s,t}^\theta(x) = D_s z_{s,t}^\theta(x) - \langle\pi_{s,t}^\theta(x), D_s z_{s,t}^\theta(x)\rangle\mathbf{1}$, and $z^\theta:[0,1]^2\times\R^d\to\R^d$ are the logits of $\pi^\theta$, for $\pi_{s,t}^\theta(x) = \mathrm{softmax}(z_{s,t}^\theta(x))$. While ESD produced the best numbers of~\citet{dfm}, our experiments at large scale have exhibited relatively unstable training. Due to that, in the main body of the paper, we focus on the PSD results. The ESD results are deferred to~\Cref{app:additional_esd_results}.

%% file: sections/method.tex
\section{Pretraining a flow}\label{sec:method}
\begin{wrapfigure}{r}{0.4\textwidth}
    \vspace{-2em}
    \centering
    \includegraphics[width=\linewidth]{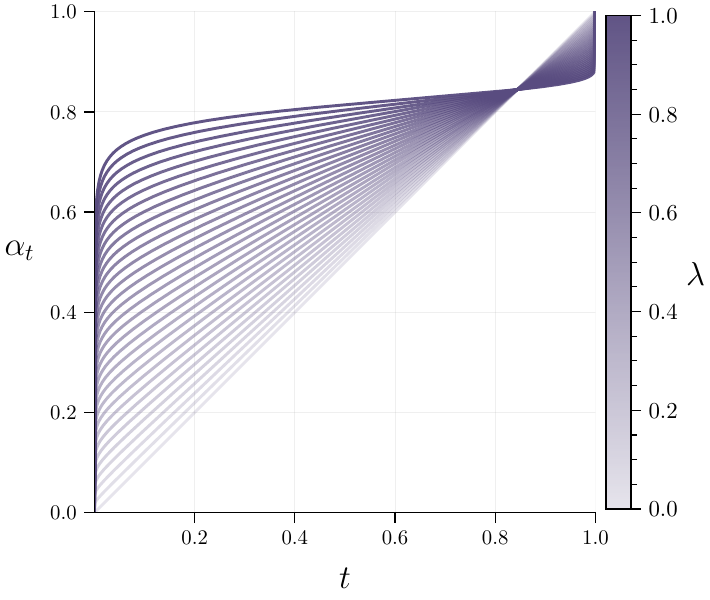}
    \caption{Mixture schedules for varying $\lambda \in [0,1]$ at vocabulary size $100$k.}
    \label{fig:mixture_schedules}
    \vspace{-2em}
\end{wrapfigure}
Current best-performing approaches for CFMs rely on a two-stage training~\citep{fmlm,dfm}: in the first stage, a simple cross-entropy denoiser is trained (see~\cref{eq:cross_ent_diag}); in the second stage, that denoiser is self-distilled into a CFM (see~\cref{eq:esd,eq:psd}). The two-stage approach is necessary, because the self-distillation objectives rely on a biased signal from the model itself, so applying them from the start degrades final performance. Moreover, self-distillation objectives are computationally demanding (requiring multiple forward passes per gradient step), making their optimisation over the entire training trajectory prohibitively expensive.

In this section, we investigate the importance of three components in the design space of CFM pretraining: the time schedule mixture, the adaptive loss weight, and the random ``unmasking'' of inputs. Finally, we sweep over optimiser hyperparameters, and discuss the possibility of their transfer to larger scales.

\subsection{Design space of CFMs}
\paragraph{Time scheduling.}\label{sec:bg_schedule}
\citet{fmlm,dfm} show the importance of the time schedule for CFMs. Instead of the linear interpolant schedule, a monotone map $\gamma:[0,1]\to[0,1]$, coined the ``error decoding'' schedule~\citep{duo,candi}, significantly improves the final model's performance. This schedule adapts the time so that the probability of incorrectly decoding the current point, $P(\argmax I_t \neq \argmax I_1)$, grows linearly in $t$, instead of exhibiting a sharp phase transition of the linear schedule. They report that a convex combination of it with the linear schedule, $\alpha_t = (1-\lambda)t + \lambda \gamma_t$, further improves the generation quality. While~\citet{dfm} train with $\lambda = 0.9$,~\citet{fmlm} use it only at inference with $\lambda \in \{0.5, 0.75, 1\}$. We ablate over this mixture choice.

\paragraph{Adaptive loss weighting.}\label{sec:bg_adaptive}
\citet{dfm} adopt the adaptive loss weight of~\citet{geng2024consistencymodelseasy}: for any objective of the form $\KL{p}{q}$, they compute $\Delta(x) = \lVert p(x) - q(x)\rVert_2^2$, and reweight by $w(x) = \stopgrad\left((\Delta(x) + c)^{-r}\right)$, with constants $c = 10^{-2}$ and $r = 1/2$ on the diagonal loss. The resulting objective is
\begin{equation}
    \mathcal L_{\mathrm{w}}(\theta) = \E_x\Big[w(x)\KL{p}{q}\Big].
\end{equation}
This upweights low-error regions and downweights high-error ones in order to stabilise training. We also assess the importance of this weight.

\paragraph{Random unmasking of input sequence}
The models are by default only trained on generating full, unconditional sequences, but final downstream applications typically require conditional generation given prompts. To match this setting, we follow~\citet{dieleman2022continuousdiffusioncategoricaldata}, and with a probability $p$ unmask a random prefix of the text, sampled from $\mathcal{U}[1,L/d]$, with $d \mid L$. We vary both $p$ and $d$.

\subsection{Hyperparameter sweep}
\begin{figure*}[t]
    \centering
    \includegraphics[width=\textwidth]{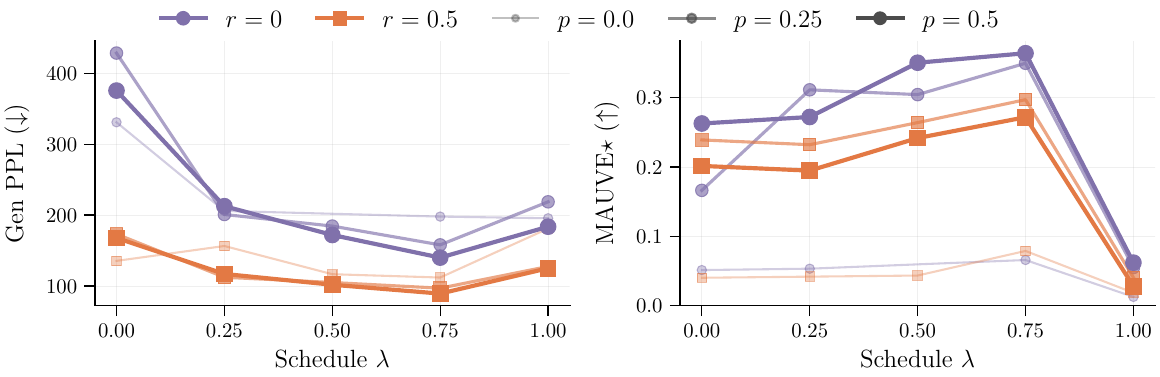}
    \caption{Ablation over schedule $\lambda$, adaptive loss reweighting $r$, and clean prefix probability $p$: Generative Perplexity and \mauve score as a function of $\lambda$.}
    \label{fig:ablation}
\end{figure*}
\paragraph{Setup.} We train a $400$M-parameter adaptation of diffusion transformers from SEDD~\citep{sedd}, which is also used in~\citet{cfm,fmlm}. In particular, we use the transformer from~\citet{cfm}, which follow the recommendations of~\citet{tvm} to upper-bound the Lipschitz constant of the model for more stable training. We train over $16$B tokens, resulting in about $40$ tokens per parameter over the Nemotron pre-training dataset~\citep{nvidia2025nvidianemotronnano2} using the TikToken tokenizer~\citep{tiktoken}. As in~\citet{LLaDa}, we also randomly crop sequences to ensure the models have been trained on different context lengths. We sweep over the mixture schedule $\lambda \in [0, 0.25, 0.5, 0.75, 1]$, adaptive loss weight on ($r = 0.5$) and off ($r = 0$), and the clean prefix probability $p \in [0, 0.25, 0.5]$.

\paragraph{Metrics.} We evaluate our models using \emph{generative perplexity} (Gen PPL, $\downarrow$) and \mauve ($\uparrow$)~\citep{MAUVE} over $128$ inference steps. For Gen PPL, we use Qwen 2.5 $7$B~\citep{qwen} to compute the perplexity of unconditionally generated samples from our models with $L = 1024$, measuring general sample quality. For \mauve, we extract sequences of length $L = 512$ from Nemotron, drop their second half, and use our model to generate the completion. Embeddings computed on both the original samples and the generated samples with GPT-2~\citep{gpt2} are used to measure similarity. A score of $1.0$ indicates completions indistinguishable from the original ones on the distributional level, measuring prompt completion quality and diversity.

\paragraph{Results.}
\Cref{fig:ablation} summarises our first sweep over a total of $30$ configurations. Adaptive loss weighting ($r = 0.5$) uniformly improves Gen PPL but slightly degrades \mauve, revealing a trade-off between token-level quality and distributional fidelity. Since \mauve is upper-bounded by $1$, the Gen PPL gain is more practically meaningful than the \mauve degradation at these values. We confirm this in~\Cref{sec:experiments}, where \mauve reaches near-maximal values (about $0.94$) despite having the adaptive loss weight enabled. The schedule optimum lies near $\lambda = 0.75$, with a sharp collapse at $\lambda = 1$, which shows that pure error-decoding without mixing is detrimental. Clean prefix probability $p > 0$ is essential for \mauve: without it, scores remain near zero regardless of other settings, resulting in out-of-distribution outputs. In a follow-up grid of $324$ configurations (\Cref{fig:marginals_006}), we narrow the ranges around these best settings and find that Gen PPL and \mauve are primarily sensitive to learning rate and weight decay, while $p$, $d$, and $\lambda$ are robust in the selected range.

\begin{figure*}[t]
    \centering
    \includegraphics[width=\textwidth]{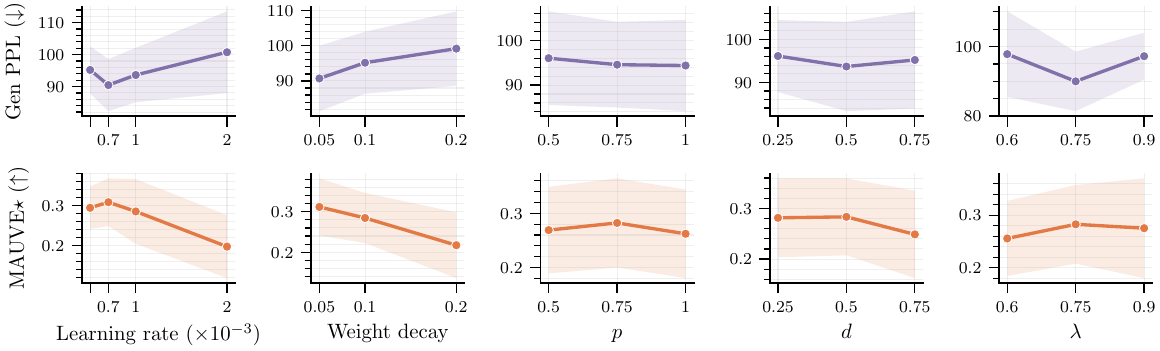}
    \caption{Marginal effect of each swept hyperparameter on Gen PPL (top) and \mauve (bottom), averaged over all other variables ($324$ configurations). Shaded regions denote $\pm 1$ standard deviation. Learning rate and weight decay are the primary drivers; $p$, $d$, and $\lambda$ are robust in the narrowed range.}
    \label{fig:marginals_006}
\end{figure*}

\subsection{Hyperparameter transfer}
Searching for optimal hyperparameters directly at large scale is expensive. Therefore, it is common to rely on zero-shot hyperparameter transfer rules to identify optimal hyperparameters when scaling the model size, batch size, and token horizon~\citep{tensorprograms,completeP,completedP}. We attempted to leverage Complete(d)P~\citep{completedP} by performing a hyperparameter sweep on a $400$M model; see~\Cref{app:sweep_details}. Since our architecture includes time-conditioning layers that are not covered by existing transfer theories, we adapted the method by treating these layers as hidden layers. However, our experiments in~\Cref{app:completedp} show that the baseline approach, with neither depth nor width transfer, achieved the best performance. We therefore leave zero-shot hyperparameter transfer for CFMs as an open question.

%% file: sections/experiments.tex
\section{Experiments}\label{sec:experiments}
For the full-scale pretraining experiment, we closely follow the setup of~\citet{scalingbeyondmaskeddiffusion}: we train a $1.7$B-parameter CFM on the Nemotron Pretraining dataset~\citep{nvidia2025nvidianemotronnano2} using the TikToken tokenizer~\citep{tiktoken} ($|V| \approx100\text{k}$), with a context length of $2048$ tokens. We train on $2.1$T tokens using $256$ H100s with a batch size of $10$ per GPU. We use the architecture of~\Cref{app:architecture}. For the self-distillation phase, we train the models for $200$k steps with $64$ H100s and a batch size of $6$, for both PSD and ESD losses.\footnote{For the self-distillation round, the number of optimisation steps is more relevant than the number of tokens.} For the ESD loss, note that we use finite differences to estimate the derivatives, as using Jacobian-Vector Products (JVPs) on such large models is prohibitively expensive in both memory and time. We find this approach computationally more efficient than the usage of custom kernels~\citep{tvm,cfm}. Additionally, we train an autoregressive baseline at the same scale for comparison.

\subsection{1.7B flow pre-training}
We present our results in~\Cref{fig:sd_flowmap_pareto,fig:mauve_star} as the ``Undistilled'' lines. At high inference steps, the flow model reaches Gen PPL competitive with the AR baseline: at $1024$ steps, our model attains $58.66$ Gen PPL, vs.\ $45.33$ for the AR model. As for the \mauve score, the model reaches near maximal values at high inference steps (approximately $0.94$), and, already at $8$ steps, exceeds the AR baseline: $0.88$ (ours) vs.\ $0.7$ (AR). In~\Cref{app:training_dynamics}, we provide additional details on the training dynamics of the model, where we can see that the generation quality steadily improves. Furthermore, we also provide qualitative samples from the undistilled model in~\Cref{app:qualitative_samples}.

\subsection{Self-distillation phase}
\subsubsection{Design space of self-distillation}
\begin{figure}[t]
    \centering
    \includegraphics[width=\textwidth]{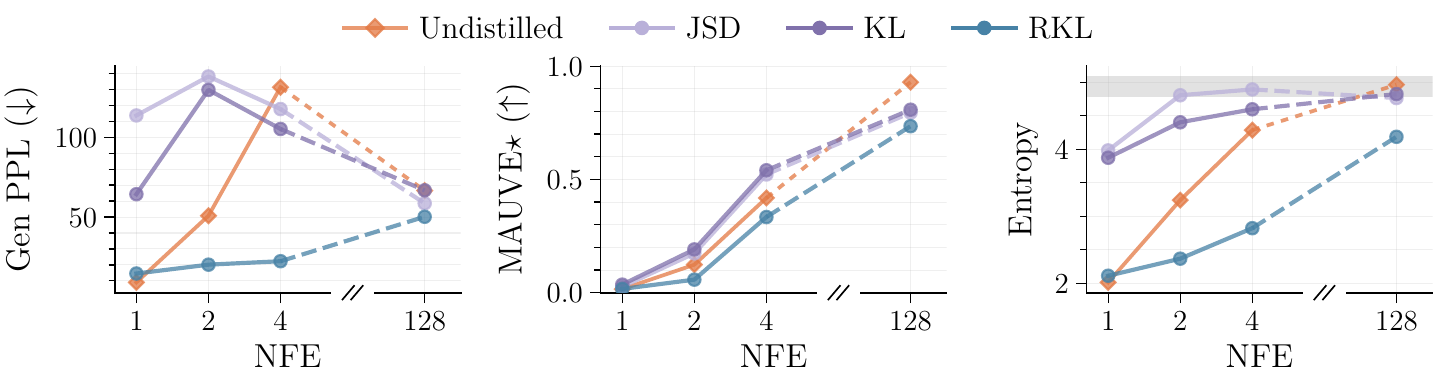}
    \caption{Ablation on PSD self-distillation, $10$k steps: marginal effect of the divergence on \mauve and Gen PPL, averaged over the time-sampling distribution.}
    \label{fig:psd_lines_kl}
\end{figure}
\begin{figure}[t]
    \centering
    \includegraphics[width=\textwidth]{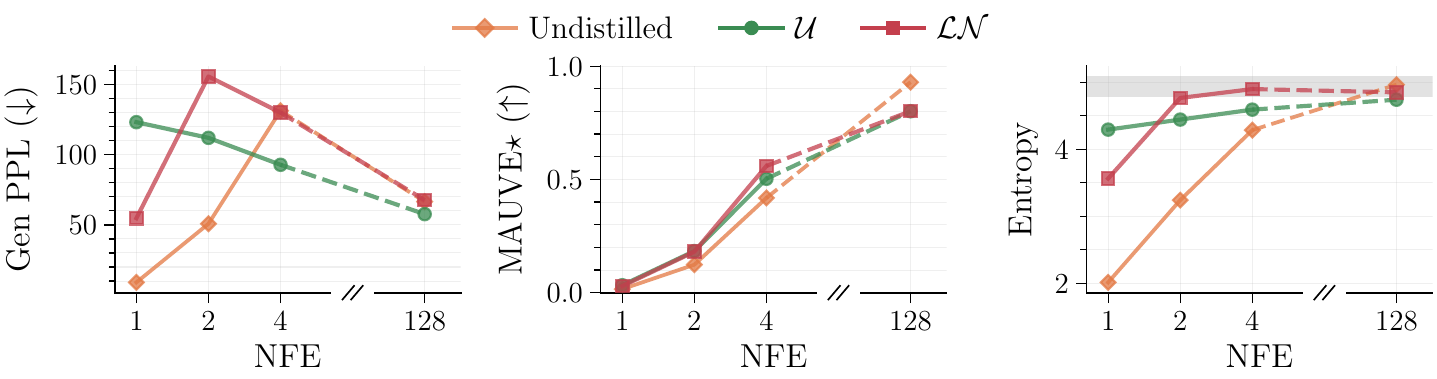}
    \caption{Ablation on PSD self-distillation, $10$k steps: marginal effect of the $(s,t)$ time-sampling distribution on \mauve and Gen PPL, averaged over the divergence without the reverse KL data.}
    \label{fig:psd_lines_time}
\end{figure}

We now investigate how to self-distill the pretrained flow into a CFM. The design space is large, and we explore two axes on short $10$k step runs. The first one is the joint time distribution of $(s,t)$: we consider the gap-schedule used in~\citet{fmlm}, starting from $10^{-2}$ and doubling every $500$ steps (denoted $\mathcal{U}$), and the gap logit-normal distribution of~\citet{tvm} (denoted $\mathcal{LN}$). As pointed out by~\citet{dfm}, we can choose any type of divergence $D$ for the SD objectives. This constitutes the second axis, as we try the following divergences: forward KL (the default choice, denoted $\mathrm{KL}$), reverse KL ($\mathrm{RKL}$) and the Jensen-Shannon Divergence ($\mathrm{JSD}$). The optimisation dynamics are discussed in~\Cref{app:divergence_choice}: the forward KL reinforces positions on which the teacher and the student concur; in contrast, JSD does not exhibit this behaviour. Avoiding this reinforcement could be preferable, as the teacher is biased and imperfect, and could thus alter the underlying flow.

We track Gen PPL and \mauve for $1$, $2$, $4$ and $128$ inference steps to observe the effect on few-step generation, and to ensure that the larger step regime remains stable. We report our results in~\Cref{fig:psd_lines_kl} and~\Cref{fig:psd_lines_time} for PSD. We omit ESD results as all runs diverged; for thoroughness, they are available in~\Cref{fig:esd_lines_kl,fig:esd_lines_time}. ESD at this scale exhibited either mode collapse or degraded \mauve relative to the undistilled flow, which we attribute to high-variance Jacobian estimates from finite differences. Therefore, we focus on PSD. All self-distillation objectives, except those based on RKL, improve the token entropy of single- and few-step generation, which is also reflected in higher \mauve at $4$ NFE. The uniform distribution produces better entropy at a single step, and better overall quality for NFE $\geq 2$. With the logit-normal distribution, single-step samples have lower entropy, which explains the lower Gen PPL. Based on these findings, we select PSD with both forward KL and JSD divergences, using the uniform time distribution, for the full $200$k-step distillation.

\subsubsection{Full self-distillation}
\begin{figure}[tb]
    \centering
    \begin{minipage}[t]{0.48\linewidth}
        \centering
        \includegraphics[width=\linewidth]{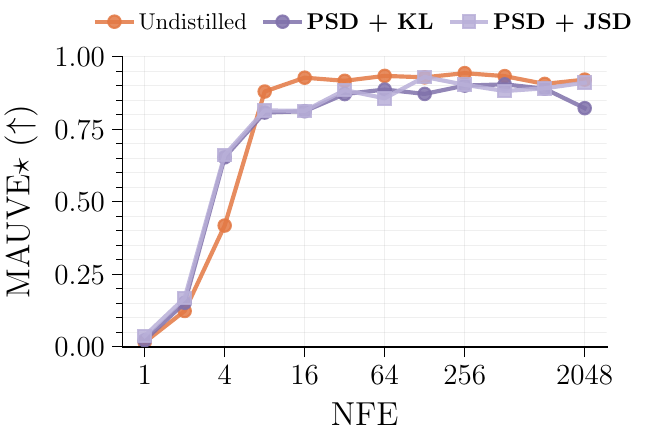}
        \caption{{\mauve vs.\ NFE.} A significant gap between the distilled and undistilled models in the low NFE regime is observable.}
        \label{fig:mauve_star}
    \end{minipage}
    \hfill
    \begin{minipage}[t]{0.48\linewidth}
        \centering
        \includegraphics[width=\linewidth]{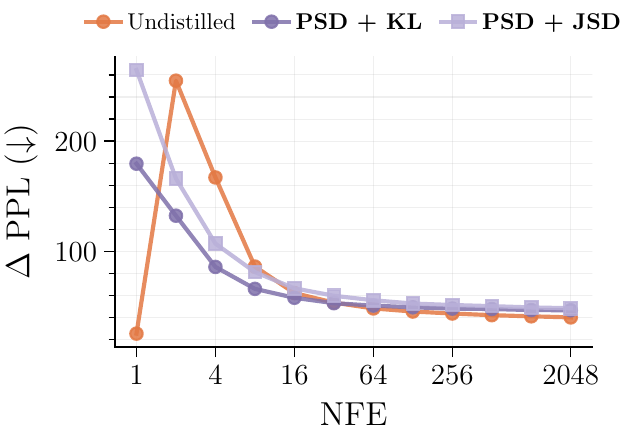}
        \caption{\deltappl (conditional generation) vs.\ NFE. The undistilled model produces collapsed outputs for $1$ NFE.}
        \label{fig:delta_ppl}
    \end{minipage}
\end{figure}

The results of the full-scale, $200$k steps self-distillation are available in~\Cref{fig:sd_flowmap_pareto} and~\Cref{fig:mauve_star}. Both flow-maps provide samples substantially closer to the data distribution in terms of token entropy. Moreover, while the undistilled model produced degenerate samples in the $1$ -- $2$ step regime, which artificially lowers the Gen PPL, the flow map models generate diverse samples, though Gen PPL remains high. The \mauve is improved by both models in the few-step regime, with the most prominent improvement at $4$ sampling steps. 
The JSD overall seems to slightly degrade generation quality; we hypothesise this is due to the higher token entropy it produces compared to forward KL.  
 
We additionally report \emph{Delta Perplexity} (\deltappl), which measures a model's ability to complete text conditioned on a dataset prompt. Given a prompt from the dataset, the evaluated model generates the remaining text, and we report the perplexity of the generated completion under Qwen~2.5~7B~\citep{qwen}. Depicted in~\Cref{fig:delta_ppl}, the \deltappl of the self-distilled models improves significantly in the few-step regime, while recovering similar perplexities for more inference steps.

\subsection{Likelihood-based benchmarks}\label{sec:elbo}
\paragraph{Semi-discrete ELBO.} Standard LM benchmarks require scoring answer choices by likelihood, which is not directly available from a flow-based model. To bridge this gap, we derive the following ELBO for the semi-discrete setting, extended from~\citet{vdm}. We denote by $\mathbf{e}_x$ the one-hot encoding of $x \in V$.

\begin{propbox}
\begin{restatable}[Semi-discrete ELBO]{proposition}{semidiscreteelbo}\label{prop:elbo}
Let $x \in V$ be a category. Let $\ell_{\mathrm{CE}}$ denote the cross-entropy loss. The following bound holds on the log likelihood of the model $p_\theta$:
\begin{equation}
    -\log p_\theta(x) \leq \E_{I_t}\left[\frac{2\dot\alpha_t\alpha_t}{(1-\alpha_t)^3} \ell_{\mathrm{CE}}\left(I_1, \pi_{t,t}^\theta(I_t)\right) \mid I_1=\mathbf{e}_x\right] + \ell_{\mathrm{CE}}(x, \pi_{1,1}^\theta(\mathbf{e}_x)).
\end{equation}
\end{restatable}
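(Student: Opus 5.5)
The plan is to reduce the statement to the continuous-time variational diffusion bound of \citet{vdm} and then bound its squared-error integrand by the cross-entropy. First I will put the interpolant in VDM form. With $x_0\sim\mathcal N(0,I)$ and $I_1=\mathbf e_x$, I write $I_t=(1-\alpha_t)x_0+\alpha_t\mathbf e_x$. Rescaling by $1/(1-\alpha_t)$, which is a bijection for $t<1$ and so does not change likelihoods, gives the latent $z_t=x_0+\frac{\alpha_t}{1-\alpha_t}\mathbf e_x$. This is a Gaussian diffusion with signal-to-noise ratio
\[
\mathrm{SNR}(t)=\frac{\alpha_t^2}{(1-\alpha_t)^2},\qquad \frac{\diff}{\diff t}\mathrm{SNR}(t)=\frac{2\dot\alpha_t\alpha_t}{(1-\alpha_t)^3}.
\]
This derivative is exactly the weight appearing in the statement, which indicates the VDM route is the intended one. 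The model $p_\theta$ is then read as the VDM generative process: ancestral sampling of the reverse Gaussian chain using the data-prediction $\hat x_\theta(z_t,t)=\pi^\theta_{t,t}(I_t)$, followed by a categorical decoder $p_\theta(x\mid I_1)=\pi^{\theta,(x)}_{1,1}(I_1)$.

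Second, I will invoke the VDM ELBO in discrete time and pass to the continuous-time limit. This gives three terms:
\[
-\log p_\theta(x)\le \KL{q(z_0\mid x)}{p(z_0)}+\tfrac12\,\E_{t\sim\mathcal U[0,1],\,x_0}\Big[\tfrac{\diff}{\diff t}\mathrm{SNR}(t)\,\big\lVert \mathbf e_x-\pi^\theta_{t,t}(I_t)\big\rVert_2^2\Big]-\log p_\theta(x\mid z_1).
\]
The three terms are handled as follows.
\begin{itemize}
\item The prior term vanishes, since $\alpha_0=0$ makes $q(z_0\mid x)=\mathcal N(0,I)=p_0$.
\item The diffusion term is the standard VDM continuous-time loss, and the Gaussian KL per step yields exactly $\tfrac12\,\Delta\mathrm{SNR}\,\lVert\cdot\rVert^2$.
\item The reconstruction term becomes $\ell_{\mathrm{CE}}(x,\pi^\theta_{1,1}(\mathbf e_x))$, because at $t=1$ the latent equals $\mathbf e_x$ deterministically.
\end{itemize}

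Third, I bound the squared error by the cross-entropy. Writing $\pi=\pi^\theta_{t,t}(I_t)$, a point of the simplex, we have
\[
\lVert\mathbf e_x-\pi\rVert^2=(1-\pi_x)^2+\sum_{k\ne x}\pi_k^2\le 2(1-\pi_x)^2\le 2(1-\pi_x)\le -2\log\pi_x=2\,\ell_{\mathrm{CE}}(I_1,\pi).
\]
The first inequality uses $\sum_{k\ne x}\pi_k^2\le(\sum_{k\ne x}\pi_k)^2$. Multiplying by $\tfrac12\frac{\diff}{\diff t}\mathrm{SNR}\ge0$, which holds for monotone $\alpha$, and integrating gives precisely the first term of the statement.

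The main obstacle is the endpoint $t=1$, where $\mathrm{SNR}=\infty$ and the rescaling degenerates. To handle it, I would first derive the bound on $[0,1-\varepsilon]$, with the decoder applied at $I_{1-\varepsilon}$, and then let $\varepsilon\to0$. This requires continuity of $\pi^\theta_{t,t}$ at $t=1$ to turn the reconstruction term into $\ell_{\mathrm{CE}}(x,\pi^\theta_{1,1}(\mathbf e_x))$, and monotone convergence for the nonnegative integrand. I would also state explicitly that $p_\theta$ denotes this stochastic ancestral-sampling model, whose likelihood the bound controls, rather than the deterministic ODE pushforward.
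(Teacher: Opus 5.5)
Your proposal is correct and follows the paper's proof: you use the continuous-time VDM bound with $\mathrm{SNR}(t)=\alpha_t^2/(1-\alpha_t)^2$, drop the vanishing prior term, read the reconstruction term as $\ell_{\mathrm{CE}}(x,\pi^\theta_{1,1}(\mathbf e_x))$, and bound $\lVert\mathbf e_x-\pi^\theta_{t,t}(I_t)\rVert_2^2\le 2\,\ell_{\mathrm{CE}}$, which the paper gets from Pinsker's inequality and you get from an elementary chain. Your endpoint and model-identification remarks go beyond what the paper makes explicit, and the endpoint one matters: with a softmax output continuous at $t=1$, $\ell_{\mathrm{CE}}$ stays bounded away from zero as $t\to1$ while $\mathrm{SNR}(t)\to\infty$, so your monotone-convergence limit is $+\infty$ and only the truncated bound on $[0,1-\varepsilon]$ is informative.
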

\end{propbox}
The proof is deferred to~\Cref{app:elbo_proof}. The semi-discrete ELBO spares us the reverse integration due to the instantaneous change of variables~\citep{ffjord,chenNODE,flowmatching}, which, in this setting, is particularly costly, as its evaluation additionally requires an integration over $\Delta^d_i \coloneqq \{p \in \Delta^d : \argmax p_j = i\}$ and as the model size is large. We discuss in more detail the issue of likelihood estimation for our setting in~\Cref{app:further_likelihoods}.

\paragraph{LM Harness.} We evaluate CFMs on standard multiple-choice question answering (MCQA) benchmarks using the ELBO estimator above.
We compare against an autoregressive baseline that we train under the same setup, and against the baselines reproduced by~\citet{scalingbeyondmaskeddiffusion} and prior work; see~\Cref{tab:likelihood_benchmarks}. Our ELBO yields non-random accuracies across most benchmarks, placing the CFM in the same range as Eso-LMs~\citep{esolms}, while remaining below discrete diffusion methods that are trained directly on ELBO maximisation. Since it is unclear how tight the bound is, refining the likelihood estimation remains a natural avenue for improvement.

\begin{table}[t]
\centering
\caption{Zero-shot accuracy (\%) on likelihood-based benchmarks. $^\dagger$Models from~\citet{scalingbeyondmaskeddiffusion} ($1.7$B params, same data and compute budget). $^*$External prior work. CFM likelihood is estimated via MC ELBO. Best non-AR results per column are in \textbf{bold}.}
\label{tab:likelihood_benchmarks}
\vskip 0.1in
\begin{small}
\definecolor{oursrow}{HTML}{E8EBF0}
\begin{tabularx}{\textwidth}{@{}lXcccccc@{}}
\toprule
& \textbf{Method} & \textbf{ARC-E} & \textbf{BoolQ} & \textbf{OBQA} & \textbf{PIQA} & \textbf{RACE} & \textbf{SIQA} \\
\midrule
& \textit{Chance} & $24.7$ & $50.4$ & $26.6$ & $51.6$ & $24.2$ & $32.2$ \\
\midrule
\multirow{6}{*}{\rotatebox[origin=c]{90}{\scriptsize Prior work}}
& SMDM-1B$^*$ & $37.4$ & $61.5$ & $27.0$ & $60.3$ & $29.3$ & $37.9$ \\
& LLaDa-8B-Base$^*$~\citep{LLaDa} & -- & -- & -- & $74.4$ & -- & -- \\
\cmidrule{2-8}
& AR$^\dagger$ {\small\textit{(autoregressive)}} & $72.7$ & $71.9$ & $40.4$ & $78.1$ & $36.2$ & $41.9$ \\
& AR {\small\textit{(autoregressive, ours)}} & $73.1$ & $63.6$ & $40.2$ & $76.9$ & $36.3$ &  $43.5$ \\
\cmidrule{2-8}
& Duo$^\dagger$ {\small\textit{(uniform diffusion)}} & $\mathbf{53.4}$ & $59.6$ & $\mathbf{33.0}$ & $\mathbf{62.7}$ & $\mathbf{35.0}$ & $39.0$ \\
& MDLM$^\dagger$ {\small\textit{(masked diffusion)}} & $50.5$ & $\mathbf{62.8}$ & $32.0$ & $62.2$ & $34.7$ & $\mathbf{39.2}$ \\
& Eso-LM$^\dagger$ {\small\textit{(interpolating)}} & $46.0$ & $53.4$ & $29.6$ & $55.6$ & $26.1$ & $36.1$ \\
\midrule
\rowcolor{oursrow} & CFM {\small\textit{(ours)}} & $47.1$ & $42.5$ & $24.4$ & $57.2$ & $28.3$ & $37.9$ \\
\bottomrule
\end{tabularx}
\end{small}
\end{table}

%% file: sections/related.tex
\section{Related work}\label{sec:related}
\paragraph{Continuous diffusion for discrete data.}
Early work by~\citet{dieleman2022continuousdiffusioncategoricaldata} attempted to learn the embeddings of the discrete data using the reparametrisation trick;~\citet{li2022diffusionlmimprovescontrollabletext} explored a similar approach with word embeddings. Later,~\citet{dirichletflowmatching} designed an interpolation scheme directly on the simplex.~\citet{fisherflowmatching,categoricalflowmatching} leveraged Riemannian geometry to alleviate the issues of linear interpolants.
Variational Flow Matching~\citep{vfm} shows how to flow from a Gaussian to one-hot encoded data. CFM~\citep{cfm}, FMLM~\citep{fmlm}, and DFM~\citep{dfm} adopted this approach, and reached competitive performance with discrete diffusion baselines. Our work builds directly on this line, and demonstrates its scalability to the $1.7$B parameter regime.

\paragraph{Accelerated sampling.}
Consistency models~\citep{consistencymodels,song2023improvedtechniquestrainingconsistency} pioneered the idea of mapping any point along a diffusion trajectory to the trajectory's endpoint, enabling one- or few-step generation.
Flow map matching~\citep{flowmapmatching,howtobuildconsistencymodel,meanflows} provides a principled framework for learning such maps via self-distillation of a pretrained flow. CFMs~\citep{cfm,fmlm,dfm} port flow map matching to the semi-discrete setting, reducing inference to a single forward pass while retaining the continuous-state benefits.

Other related works are discussed in~\Cref{app:related_work}.

%% file: sections/limitations.tex
\section{Limitations}
CFMs require materialising full $L \times \lvert V\rvert$ continuous-state trajectory matrices for all steps. At our scale ($L = 2048$, $\lvert V\rvert \approx 100\text{k}$), this is manageable given sufficient hardware ($256$ H100s), but the cost grows linearly in context length and vocabulary size. We absorb this bottleneck through compute and efficient cross-entropy implementation~\citep{DBLP:conf/iclr/WijmansHHKK25}, without solving it. Moreover, CFMs also present challenges in principled hyperparameter transfer. Unlike standard transformers, CFMs condition on continuous times via dedicated layers, which fall outside the parametrisations covered by Complete(d)P~\citep{completedP}. Our attempts to apply Complete(d)P to this architecture resulted in worse performance than direct tuning at the target scale. Adapting time-conditioned variants of $\mu$P~\citep{zheng2025scaling} is left for future work.

%% file: sections/conclusion.tex
\section{Conclusion}
We have shown that Categorical Flow Maps scale competitively to the $1.7$B-parameter regime. After self-distillation, they produce diverse, high-quality text in as few as $4$ function evaluations while maintaining near-data-level token entropy, pushing the quality vs.\ throughput Pareto front beyond discrete diffusion baselines. Our semi-discrete ELBO places the model in the same ballpark as other non-autoregressive methods, though a gap remains with approaches trained directly on ELBO maximisation. These results suggest that continuous-state generation is a viable path for discrete data at LLM scale, and that the broader flow map toolbox (guidance, tilting, reward fine-tuning) is now within reach for language modelling.

%% file: sections/appendix.tex
\section{Related work}\label{app:related_work}
\paragraph{Continuous diffusion for discrete data.} With a different approach,~\citet{langflow} proposed a way of learning continuous trajectories for discrete data, leveraging embedding spaces and Bregman divergences.

\paragraph{Discrete diffusion.}
Diffusion models for discrete state spaces originate from D3PM~\citep{d3pm} and multinomial diffusion~\citep{argmaxflows}, which design discrete forward processes whose reversal can be learned by a neural network.
\citet{campbell2022ctmc} recast the framework in continuous time and consider the rate transition matrices of the reverse CTMC.
SEDD~\citep{sedd} replaces the standard ELBO with a score-entropy objective, improving both training stability and sample quality.
Masked diffusion~\citep{mdlm,shi2024maskeddiffusion} simplified the earlier approaches, resulting in a simple cross-entropy objective that forms an ELBO on the data distribution. Discrete Flow Matching~\citep{discreteflowmatching} generalised the framework by learning a probability velocity field over discrete states.~\citep{schiffUSDM} further investigated the uniform-state case of discrete diffusion.

\paragraph{Accelerated sampling for discrete diffusion.} Several approaches have attempted to implement accelerated methods for discrete diffusion.~\citet{duo} shows how a continuous Gaussian latent can generate a uniform-state discrete diffusion process, enabling consistent trajectories for SDTT-like self-distillation~\citep{sdtt}.~\citet{di4c} show that by incorporating correlation through diffusion steps the few-step sampling regime can be improved. Based on similar observations,~\citet{redi} propose to refine the couplings between source and target distributions.

\section{Extended presentation of flow maps}\label{app:flowmaps}
\subsection{Formal presentation}
As described in the background section, flow maps satisfy the tangent condition and are characterised by three additional equivalent properties. We first begin by proving the tangent condition.

\definecolor{apurp}{HTML}{A196CC}
\definecolor{apurpd}{HTML}{594C80}
\definecolor{aorange}{HTML}{FA975C}
\definecolor{aoranged}{HTML}{E37944}
\definecolor{agray}{HTML}{74808B}
\definecolor{agraylt}{HTML}{C7D1D6}

\begin{lemma}[Tangent condition]
For any solution of the probability flow, $(x_t)_{t\in[0,1]}$, 
\begin{equation}
    \forall 0 \leq t \leq 1,\qquad v_{t,t}(x_t) = b_t(x_t).
\end{equation}
\end{lemma}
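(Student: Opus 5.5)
The plan is to differentiate the jump condition with respect to the target time $t$ and then send $t\to s$. Fix a solution $(x_t)_{t\in[0,1]}$ of the probability flow $\dot x_t = b_t(x_t)$. The jump condition gives $X_{s,t}(x_s) = x_t$ for all $0\le s,t\le 1$. Substituting the reparameterisation $X_{s,t}(x) = x + (t-s)v_{s,t}(x)$, we get
\begin{equation*}
    x_t = x_s + (t-s)\,v_{s,t}(x_s).
\end{equation*}

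The key step is to take $t \neq s$, rearrange to
\begin{equation*}
    v_{s,t}(x_s) = \frac{x_t - x_s}{t-s},
\end{equation*}
and let $t\to s$. The right-hand side is a difference quotient of the trajectory. It therefore converges to $\dot x_s = b_s(x_s)$, since $t\mapsto x_t$ is differentiable as a solution of the ODE. The left-hand side converges to $v_{s,s}(x_s)$, provided $v$ is continuous in its second time argument. Renaming $s$ as $t$ gives $v_{t,t}(x_t) = b_t(x_t)$.

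An equivalent route avoids dividing by $t-s$. Differentiate the displayed identity in $t$ to get
\begin{equation*}
    \dot x_t = v_{s,t}(x_s) + (t-s)\,\partial_t v_{s,t}(x_s),
\end{equation*}
then evaluate at $t=s$. The second term vanishes, leaving $b_s(x_s) = \dot x_s = v_{s,s}(x_s)$. This version requires $v$ to be differentiable in $t$, with $\partial_t v$ bounded near the diagonal.

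The main obstacle is regularity, and the fact that $v_{s,s}$ is not directly defined by the reparameterisation. At $t=s$ the reparameterisation is vacuous, because $X_{s,s}(x)=x$ holds for any value of $v_{s,s}$. So $v_{s,s}$ must be understood as the continuous extension $\lim_{t\to s} v_{s,t}$. I will state this explicitly as the standing convention, together with continuity of $v$ on $[0,1]^2\times\R^d$. Under that convention the difference-quotient argument above is rigorous.

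At the endpoint $s=1$ only left limits $t\to 1^-$ are available. The same argument applies with one-sided derivatives, since $x$ is $C^1$ on the closed interval whenever $b$ is continuous.
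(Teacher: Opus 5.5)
Your proof is correct, and your second route is the paper's argument: differentiate $X_{s,t}(x_s)=x_s+(t-s)v_{s,t}(x_s)$ in $t$, equate the result with $\partial_t x_t=b_t(x_t)$, then let $s\to t$.

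Your primary difference-quotient route is a slightly more economical version of the same idea. You fix $s$, and with it the base point $x_s$, and let only $t\to s$. That needs nothing beyond continuity of $t\mapsto v_{s,t}(x_s)$ at $t=s$. The paper instead sends $s\to t$, which moves the base point. So it tacitly needs continuity of $s\mapsto v_{s,t}(x_s)$ (e.g.\ joint continuity of $v$). It also needs local boundedness of $\partial_t v$ near the diagonal to make $(t-s)\partial_t v_{s,t}(x_s)$ vanish.

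On your second route: if you evaluate exactly at $t=s$ rather than taking a limit, you need no bound on $\partial_t v$. The derivative of $t\mapsto (t-s)v_{s,t}(x_s)$ at $t=s$ is just $\lim_{t\to s}v_{s,t}(x_s)$, so your two routes are really the same argument.

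Your remark that the reparameterisation leaves $v_{s,s}$ undetermined is worth keeping. It means the lemma is really a statement about the continuous extension of $v$ to the diagonal, which the paper leaves implicit. The one-sided caveat you raise at $s=1$ applies equally at $s=0$.
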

\begin{proof}
Consider the partial derivative with respect to $t$ of the flow map applied at a solution of the probability flow:
\begin{equation}
    \partial_t X_{s,t}(x_s) = v_{s,t}(x_s) + (t-s)\partial_t v_{s,t}(x_s).
\end{equation}
By definition, we also know that
\begin{equation}
    \partial_t X_{s,t}(x_s) = \partial_t x_t = b_t(x_t).
\end{equation}
Combining the two relations and taking the limit as $s$ tends to $t$, we arrive at the desired conclusion.
\end{proof}
The tangent condition is not surprising if we consider the two-time vector field to be the integral of the drift of the flow, it can also be derived by using Leibniz's integral rule.

We can now present the three characterisations from~\citet{flowmapmatching,howtobuildconsistencymodel}.

\begin{proposition}
A flow map of the probability flow satisfies the following three conditions, for any $0 \leq s,u,t, \leq 1$ and any solution of the probability flow:
\begin{itemize}
    \item \textbf{Lagrangian condition}. $\partial_t X_{s,t}(x_s) = v_{t,t}(X_{s,t}(x_s))$.
    \item \textbf{Semigroup condition}. $X_{u, t} \circ X_{s,u} = X_{s,t}$.
    \item \textbf{Eulerian condition}. $\partial_s X_{s,t}(x_s) + \nabla X_{s,t}(x_s) v_{s,s}(x_s) = 0$.
\end{itemize}
\end{proposition}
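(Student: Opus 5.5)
The plan is to work directly from the defining jump condition $X_{s,t}(x_s) = x_t$, valid for every solution $(x_t)_{t\in[0,1]}$ of $\dot x_t = b_t(x_t)$, together with the tangent condition $v_{t,t} = b_t$ from the preceding Lemma. Throughout I assume $b$ is continuous in $t$ and $C^1$ (in particular locally Lipschitz) in $x$, with solutions existing on all of $[0,1]$ both forward and backward in time. The Cauchy--Lipschitz theorem then gives three things I will use. First, for every $s\in[0,1]$ and every $x\in\R^d$ there is a unique solution with $x_s = x$, so $X_{s,t}$ is a well-defined map on all of $\R^d$ for any $s,t$, including $s>t$. Second, $(s,t,x)\mapsto X_{s,t}(x)$ is $C^1$ by differentiable dependence on initial conditions. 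Third, any identity established "along any solution, at the point $x_s$" extends to all $x\in\R^d$.

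\textbf{Lagrangian condition.} Fix $s$ and a solution $(x_t)$. Since $X_{s,t}(x_s) = x_t$ for all $t$, differentiating in $t$ gives
\begin{equation}
\partial_t X_{s,t}(x_s) = \dot x_t = b_t(x_t).
\end{equation}
The tangent condition $b_t = v_{t,t}$ and the substitution $x_t = X_{s,t}(x_s)$ then yield $v_{t,t}(X_{s,t}(x_s))$, as claimed.

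\textbf{Semigroup condition.} Given $x$ and $s$, let $(x_t)$ be the unique solution with $x_s = x$. The jump condition applied twice along this single trajectory gives
\begin{equation}
X_{u,t}(X_{s,u}(x)) = X_{u,t}(x_u) = x_t = X_{s,t}(x).
\end{equation}
Since $x$ was arbitrary, the maps coincide: $X_{u,t}\circ X_{s,u} = X_{s,t}$.

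\textbf{Eulerian condition.} Fix $t$ and a solution $(x_s)$. The quantity $X_{s,t}(x_s) = x_t$ does not depend on $s$, so its total derivative in $s$ vanishes. The chain rule, which requires the $C^1$ regularity of $X$ noted above, gives
\begin{equation}
0 = \frac{\diff}{\diff s} X_{s,t}(x_s) = \partial_s X_{s,t}(x_s) + \nabla X_{s,t}(x_s)\,\dot x_s .
\end{equation}
Substituting $\dot x_s = b_s(x_s) = v_{s,s}(x_s)$ by the tangent condition gives the Eulerian identity.

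The computations themselves are one-liners. The step I expect to need the most care is the regularity and well-posedness bookkeeping. Specifically, I must justify that $X_{s,t}$ is defined globally and for $s>t$, which requires backward solvability on $[0,1]$. I must also justify that it is jointly differentiable in $(s,x)$ so the chain rule in the Eulerian step is legitimate, and that every point lies on some trajectory so the semigroup identity holds as an equality of maps rather than only on trajectory points. I would handle all three by invoking standard ODE theory (existence, uniqueness and $C^1$ dependence on initial data) under the stated assumption on $b$, noting that for the interpolant drift this holds whenever $p_0$ is Gaussian and $t<1$. The endpoint $t=1$ would then be treated by continuity.
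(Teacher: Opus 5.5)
Your proof is correct and follows essentially the same route as the paper's. The Lagrangian and semigroup steps coincide, and your Eulerian step, differentiating the $s$-independent quantity $X_{s,t}(x_s)$, is the same computation as the paper's total $s$-derivative of $X_{s,t}\circ X_{t,s}=\mathrm{Id}$, since $x_s = X_{t,s}(x_t)$. Your well-posedness bookkeeping goes beyond anything the paper provides, although the closing aside that $t=1$ can be recovered by continuity fails for the semi-discrete interpolant when $1$ is the starting time: the flow pushes a Gaussian onto finitely many one-hot vertices, so $X_{1,s}$ is not a well-defined map; this lies outside your stated assumptions and does not affect the argument.
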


\begin{proof}
For the Lagrangian condition, it suffices to recall that $X_{s,t}(x_s) = x_t$, which implies, by taking the partial derivative with respect to time, that $\partial_t X_{s,t}(x_s) = v_{t,t}(x_t) = v_{t,t}(X_{s,t}(x_s))$.

The semigroup condition is trivial.

The Eulerian condition comes from the fact that $X_{s,t}\circ X_{t,s} = \mathrm{Id}$, of which one can take the total derivative with respect to $s$, yielding the required identity.
\end{proof}

Each of these conditions yields a simple MSE objective. When null, and when the flow matching objective is null too, the learnt map is the unique flow map of the probability flow. Formally, the flow map learning objective amounts to
\begin{equation}
    \mathcal L(\theta) = \E \left\lVert v^\theta_{t,t}(I_t) - \dot I_t\right\rVert^2 + \mathcal L_{\mathrm{SD}} (\theta),
\end{equation}
where, in practice, $\mathcal L_{\mathrm{SD}}(\theta)$ is either one of
\begin{itemize}
    \item \textbf{Lagrangian self-distillation}. $\mathcal L_{\mathrm{LSD}}(\theta) = \E\left\lVert \partial_t X^\theta_{s,t}(I_s) - \stopgrad(v^\theta_{t,t}(X^\theta_{s,t}(I_s)) \right\rVert^2$
    \item \textbf{Progressive self-distillation}. $\mathcal{L}_{\mathrm{PSD}}(\theta) = \E\left\lVert X^\theta_{s,t}(I_s) - \stopgrad(X^\theta_{u,t}(X^\theta_{s,u}(I_s)) \right\rVert^2$ 
    \item \textbf{Eulerian self-distillation}. $\mathcal{L}_\mathrm{ESD}(\theta) = \E\left\lVert \partial_s X_{s,t}(x_s) - \stopgrad(\nabla X_{s,t}(I_s)v_{s,s}(I_s)) \right\rVert^2$,
\end{itemize}
with $\stopgrad$ denoting the stop-gradient operator, which is such that $\nabla_\theta \stopgrad(f(\theta)) = \mathbf 0$. Note that the placement of $\stopgrad$ does not change the minimiser of those objectives, and is put in those positions in practice for stability and memory usage reasons.

\subsection{Algorithms}\label{app:algorithms}
We provide the sampling algorithm for a denoiser (flow matching) or a two-time denoiser (flow map) in~\Cref{alg:vf_sample}.

\begin{figure}[t]
\begin{algorithm}[H]
\caption{Sampling: endpoint flow matching or flow map}\label{alg:vf_sample}
\begin{algorithmic}[1]
\Require Trained denoiser $\pi^\theta_t$ or two-time denoiser $\pi_{s,t}^\theta$, schedule $0=t_0<\cdots<t_N=1$
\State Sample $x \sim p_0$
\For{$i = 0, \ldots, N-1$}
    \State $x \gets x + \frac{t_{i+1} - t_i}{1 - t_i}\!\left(\pi^\theta_{t_i}(x) - x\right)$\quad or $x \gets x + \frac{t_{i+1} - t_i}{1 - t_i}\!\left(\pi^\theta_{t_i, t_{i+1}}(x) - x\right)$
\EndFor
\State \Return $\arg\max(x)$
\end{algorithmic}
\end{algorithm}

\end{figure}

\section{Semi-discrete likelihoods}
\label{app:further_likelihoods}
In this section, we detail our attempts at extracting likelihoods out of a CFM. We note that we are interested in the following quantity, for a category $x \in V$,
\begin{equation}
\label{eq:semi_discrete_likelihood}
    p_\theta(x) = \int p_\theta(x\mid x_1)  p_\theta(x_1)\diff x_1.
\end{equation}
With argmax decoding, this reduces to
\begin{equation}
    p_\theta(x) = \int \mathbbm{1}\left(\argmax_j x_1^j = x\right) p_\theta(x_1) \diff x_1.
\end{equation}

\subsection{Reversing the ODE}
The continuous likelihood of the endpoint, $p_\theta(x_1)$, is accessible by reversing the ODE~\citep{ffjord,chenNODE}, as mentioned in~\Cref{sec:elbo}. Specifically, with the instantaneous change of variables, we have that
\begin{equation}
\log p_\theta(x_1) = \log p_\theta(x_0) - \int_{0}^1 \mathrm{div}(v_\theta(x_t)) \diff t,
\end{equation}
where, in our case,
\begin{equation}
\forall x \in \R^d, 0 \leq t < 1, \qquad v^\theta_t(x) = \frac{\pi^\theta_{t,t} (x) - x}{1-t}.
\end{equation}
The divergence can be efficiently estimated using the Hutchinson trace estimator~\citep{hutchinsonTrace}:
\begin{equation}
\mathrm{Tr}(A) = \E[z^\top A z],\qquad\text{ for } z\text{, s.t. }\E[zz^\top] = I.
\end{equation}
This avoids the materialisation of the Jacobian matrix through a Jacobian-Vector Product (JVP). While this is true, this computation remains prohibitively expensive in high dimensions. In our case, we operate over $\R^{L\times \lvert V\rvert}$ requiring many samples for the convergences of the above estimator, and applying a JVP to a $1.7$B model is also expensive. Besides, this computation has to be carried through time and space in our case, as described in~\eqref{eq:semi_discrete_likelihood}. We therefore cannot leverage this technique.

\subsection{Semi-discrete ELBO}\label{app:elbo_proof}
We prove, here, the semi-discrete ELBO stated in~\Cref{sec:elbo}.
\begin{propbox}
\semidiscreteelbo*
\end{propbox}
\begin{proof}
\citet{vdm} provide the following continuous-time ELBO:
\begin{equation}
- \log p_\theta(x) \leq \KL{q(I_0 \mid I_1=\mathbf{e}_x)}{p(I_0)} + \E_{q(z_0 \mid I_1=\mathbf e_x)}\left[-\log p_\theta(x\mid I_1 = \mathbf e_x)\right] + \mathcal{L}_\infty(x).
\end{equation}
The first term is the prior loss and is $0$ in our case. The second term is a cross entropy loss between the clean point, and our prediction at time $1$, assuming a continuous relaxation of the decoding of our model, replacing the argmax by a sample from $\mathrm{Cat}(x_1)$, where $x_1$ is the endpoint of our trajectory. The last term is the continuous-time diffusion loss and is given by
\begin{align}
\mathcal{L}_\infty(x) = \frac{1}{2}\E\int_{0}^1 \mathrm{SNR}'(t) \left\lVert \mathbf{e}_x - \pi_{t,t}^\theta(I_t)\right\rVert_2^2\diff t,
\end{align}
where $\mathrm{SNR}(t) = \frac{\alpha_t^2}{(1-\alpha_t)^2}$, for any $t$, the derivative of which corresponds to twice the weight in the final integral. Moreover, we have, by Pinsker's inequality, since $\mathbf{e}_x, \pi_{t,t}^\theta(I_t) \in \Delta^d$,
\begin{equation}
\lVert \mathbf{e}_x - \pi_{t,t}^\theta(I_t) \rVert_2^2 \leq 2\KL{\mathbf{e}_x}{\pi_{t,t}^\theta(I_t)} = 2\ell_\mathrm{CE}(x, \pi_{t,t}^\theta(I_t))
\end{equation}
\end{proof}
The tightness of this bound in the semi-discrete setting remains, however, unknown. We hypothesise that it is possible to extract better likelihoods out of the model.

\section{Experimental details}\label{app:experimental_details}

\subsection{Model configurations}

\Cref{tab:model_configs} summarises the model configurations.

\begin{table}[h]
    \caption{Model configurations.}
    \label{tab:model_configs}
    \centering
    \begin{tabular}{lcc}
        \toprule
        & \textbf{400M (sweep)} & \textbf{1.7B (scaling)} \\
        \midrule
        Hidden dimension $d$ & 1280 & 2304 \\
        Transformer blocks & 8 & 18 \\
        Attention heads & 20 & 36 \\
        Head dimension & 64 & 64 \\
        MLP expansion & 3$\times$ & 3$\times$ \\
        Context length & 1024 & 2048 \\
        Global Batch Size & 324 & 2560 \\
        \midrule
        Tokenizer & \multicolumn{2}{c}{TikToken} \\
        Vocabulary size & \multicolumn{2}{c}{$\sim\! 100$k} \\
        Training data & \multicolumn{2}{c}{Nemotron Pretraining dataset} \\
        Prior & \multicolumn{2}{c}{$\mathcal{N}(0, I)$} \\
        Precision & \multicolumn{2}{c}{bfloat16} \\
        Hardware & $8 \times$ H100 & $256 \times$ H100 \\
        \bottomrule
    \end{tabular}
\end{table}

\subsection{Architecture}\label{app:architecture}

The backbone is a transformer with AdaLN-Zero modulation~\citep{dit}, rotary position embeddings~\citep{RoPE}, SwiGLU MLP, and RMSNorm, following the Lipschitz recommendations of~\citet{tvm}. The network takes as input the noisy simplex representation $x \in \mathbb{R}^{L \times K}$ (sequence length $L$, vocabulary size $K$), together with two scalar time conditionings $(s, t)$ embedded via sinusoidal encodings and injected through AdaLN-Zero. For $s = t$, the model is a standard denoiser; for $s \neq t$, the two conditionings allow learning the full two-time flow map.

\subsection{Hyperparameter sweep details}\label{app:sweep_details}

The hyperparameter search was conducted in two stages, both at the $400$M proxy scale (\Cref{tab:model_configs}), trained for 100k steps on the Nemotron corpus.

\paragraph{Optimizer fundamentals (702 configurations).} We first sweep over learning rate, $\beta_1$, $\beta_2$, weight decay, $\varepsilon$, and dropout. \Cref{fig:optimizer_fundamentals} shows the marginal effect of each variable on CE loss. Learning rate has a clear optimum around $7 \times 10^{-4}$; higher weight decay improves CE loss; $\beta_2 = 0.95$ is consistently preferred over larger values, in line with the findings of~\citet{tvm}; $\beta_1$, $\varepsilon$, and dropout have negligible marginal effects ($\varepsilon$ insensitive, dropout weakly detrimental).

\begin{figure*}[t]
    \centering
    \includegraphics[width=\textwidth]{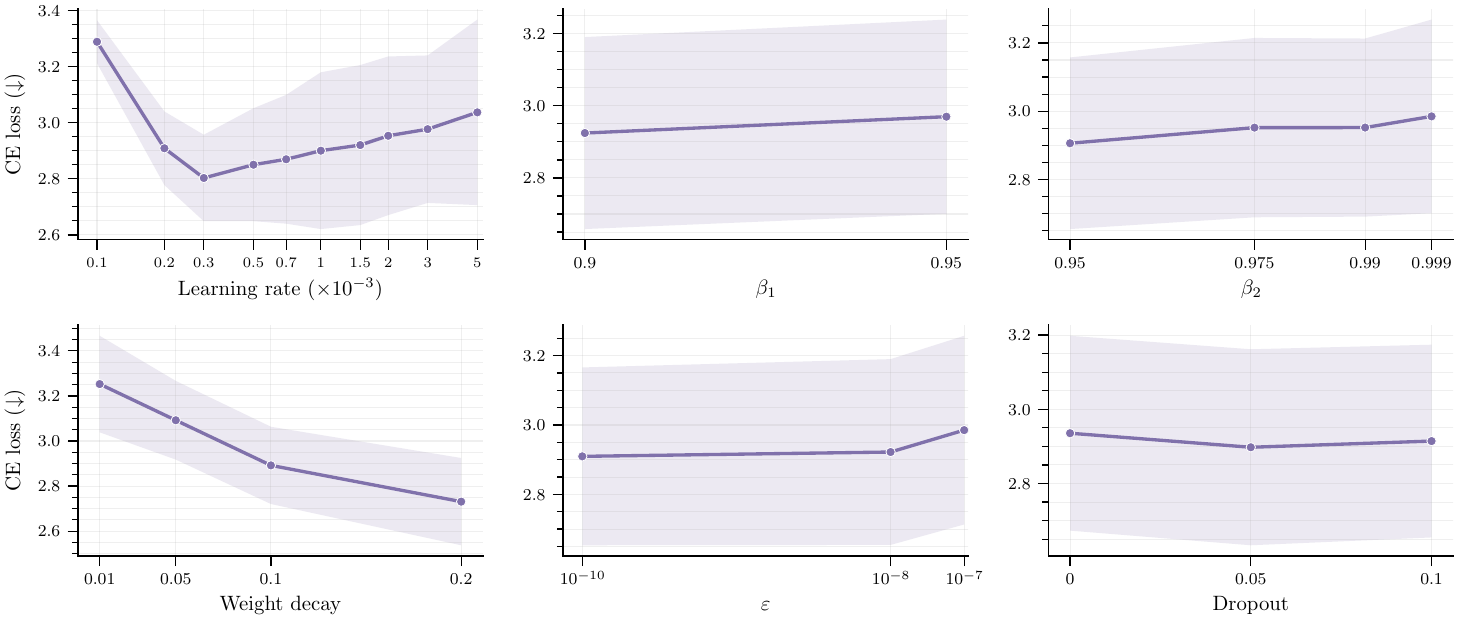}
    \caption{Marginal effect of optimizer hyperparameters on CE loss ($\pm 1$ std), averaged over $702$ configurations. $\beta_2 = 0.95$ is consistently preferred; weight decay helps; $\varepsilon$ and $\beta_1$ are insensitive.}
    \label{fig:optimizer_fundamentals}
\end{figure*}

\paragraph{Design space sweep (30 configurations).} We perform a grid search over the mixture schedule $\lambda \in \{0, 0.25, 0.5, 0.75, 1\}$, the adaptive loss reweighting exponent $r \in \{0, 0.5\}$, and the clean prefix probability $p \in \{0, 0.25, 0.5\}$, with all other hyperparameters fixed. This sweep identifies $\lambda \approx 0.75$, $r = 0.5$, and $p > 0$ as the best operating point (\Cref{fig:ablation}).~\Cref{fig:prefix_ablation} shows the complementary view with $p$ on the $x$-axis.

\paragraph{Optimizer sweep (324 configurations).} Fixing $r = 0.5$ and narrowing the ranges around the best design-space settings, we sweep over learning rate $\in \{5 \times 10^{-4},\, 7 \times 10^{-4},\, 10^{-3},\, 2 \times 10^{-3}\}$, weight decay $\in \{0.05, 0.1, 0.2\}$, $p \in \{0.5, 0.75, 1.0\}$, max prefix fraction $d \in \{0.25, 0.5, 0.75\}$, and $\lambda \in \{0.6, 0.75, 0.9\}$. The marginal effects (\Cref{fig:marginals_006}) show that learning rate and weight decay are the primary drivers, while $p$, $d$, and $\lambda$ are robust in the narrowed range. \Cref{fig:lr_wd} details the learning rate--weight decay interaction, and \Cref{fig:scatter} shows the Gen PPL vs \mauve trade-off across all configurations ($\rho = -0.51$).

\begin{figure*}[t]
    \centering
    \includegraphics[width=\textwidth]{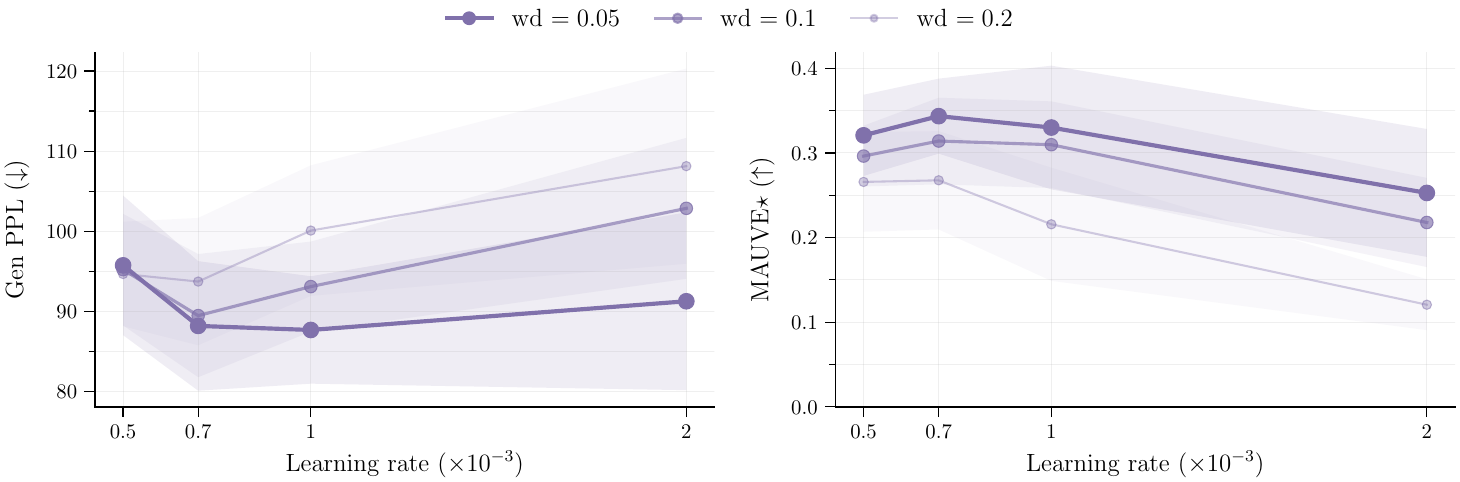}
    \caption{Gen PPL and \mauve as a function of learning rate, conditioned on weight decay ($\pm 1$ std shaded). Low weight decay consistently improves both metrics.}
    \label{fig:lr_wd}
\end{figure*}

\begin{figure*}[t]
    \centering
    \includegraphics[width=\textwidth]{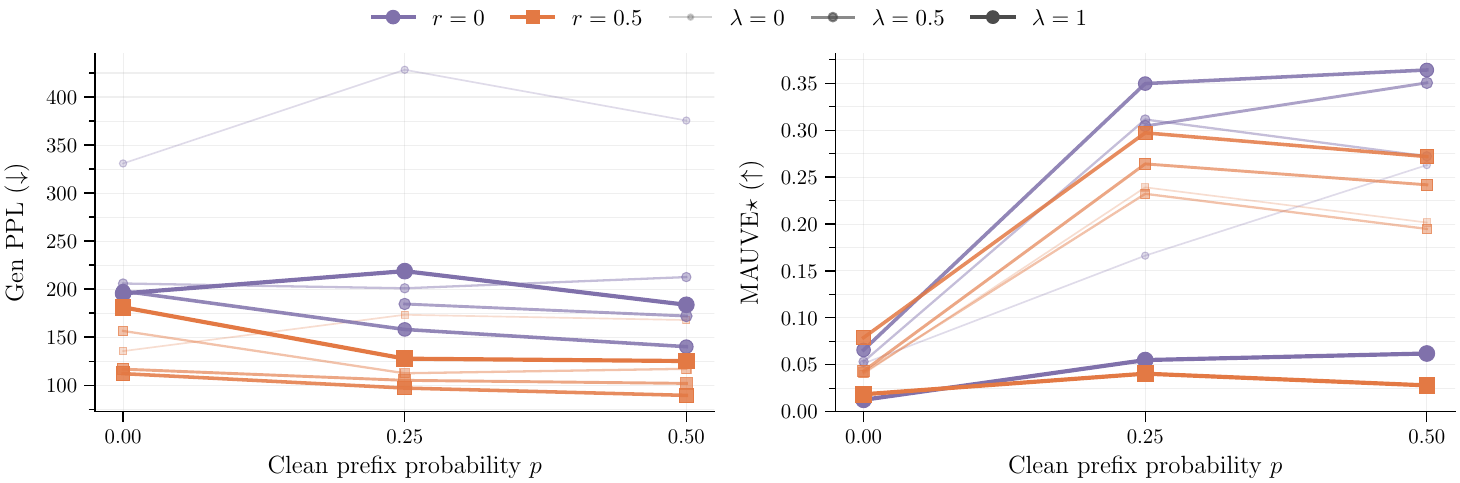}
    \caption{Gen PPL and \mauve as a function of clean prefix probability $p$. Colour encodes reweighting $r$, opacity encodes schedule $\lambda$. Increasing $p$ substantially improves \mauve with minimal impact on Gen PPL.}
    \label{fig:prefix_ablation}
\end{figure*}

\Cref{fig:scatter} shows the joint distribution of Gen PPL and \mauve across all $324$ configurations. The two metrics are moderately anti-correlated ($\rho = -0.51$), suggesting a trade-off between token-level quality and distributional fidelity for small training horizons.

\begin{figure}[t]
    \centering
    \includegraphics[width=0.7\textwidth]{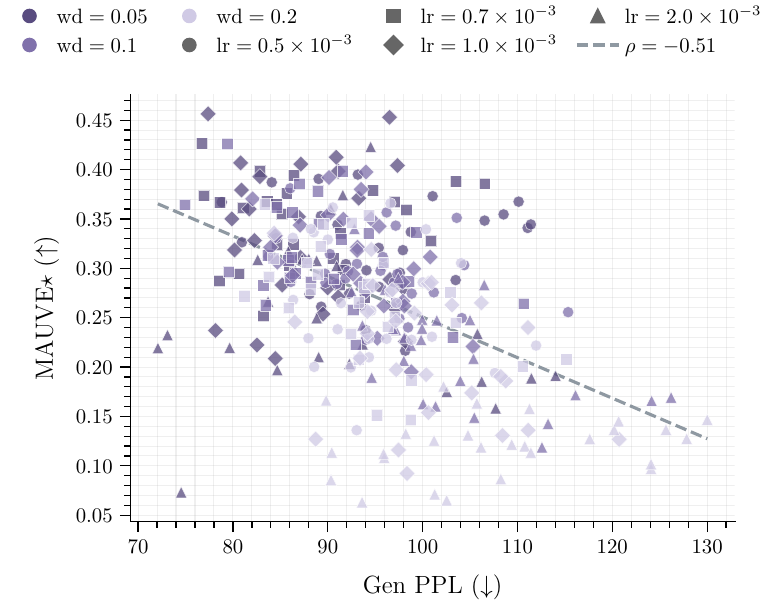}
    \caption{Gen PPL vs \mauve for all $324$ optimizer sweep configurations. Colour encodes weight decay, marker shape encodes learning rate. Dashed line: linear fit ($\rho = -0.51$).}
    \label{fig:scatter}
\end{figure}

\subsection{Complete(d)P hyperparameter transfer}\label{app:completedp}

We attempted to leverage Complete(d)P~\citep{completedP} to transfer hyperparameters from the 400M proxy scale to the 1.7B target scale. \Cref{fig:completedp} compares training loss curves with and without Complete(d)P transfer at batch size $1{,}280$ ($128 \times$ H100). The run without Complete(d)P consistently achieves lower CE loss throughout training. We find that Complete(d)P degrades Gen PPL by $35\%$ and \mauve by $10\%$.

\begin{figure}[t]
    \centering
    \includegraphics[width=0.7\textwidth]{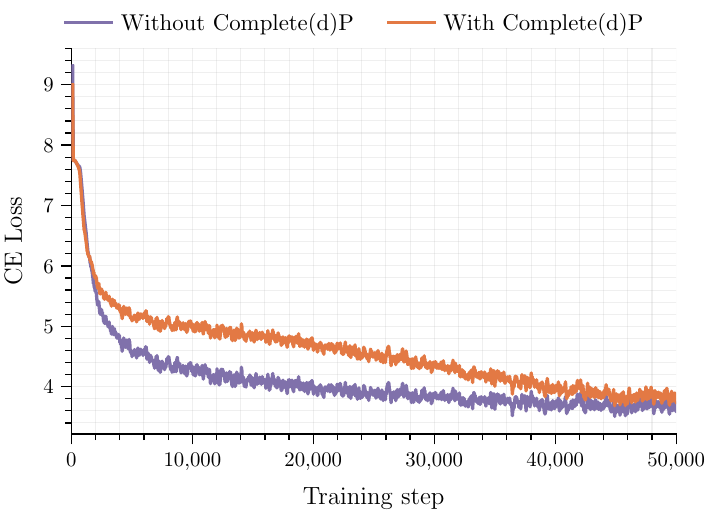}
    \caption{CE training loss with and without Complete(d)P hyperparameter transfer (batch size $1{,}280$, $128\times$H100). The transferred hyperparameters result in consistently higher loss.}
    \label{fig:completedp}
\end{figure}

\subsection{Training dynamics}\label{app:training_dynamics}
We provide in~\Cref{fig:training_dynamics} the evolution of \deltappl and \mauve through time using $128$ inference steps. Both improve rather steadily across time, but, given their relatively high variance, some parts do not follow the overall monotonic trend. We also report that the Gen PPL and entropy converge relatively fast to the right values, and remain stable across training.
\begin{figure*}[t]
    \centering
    \includegraphics[width=\textwidth]{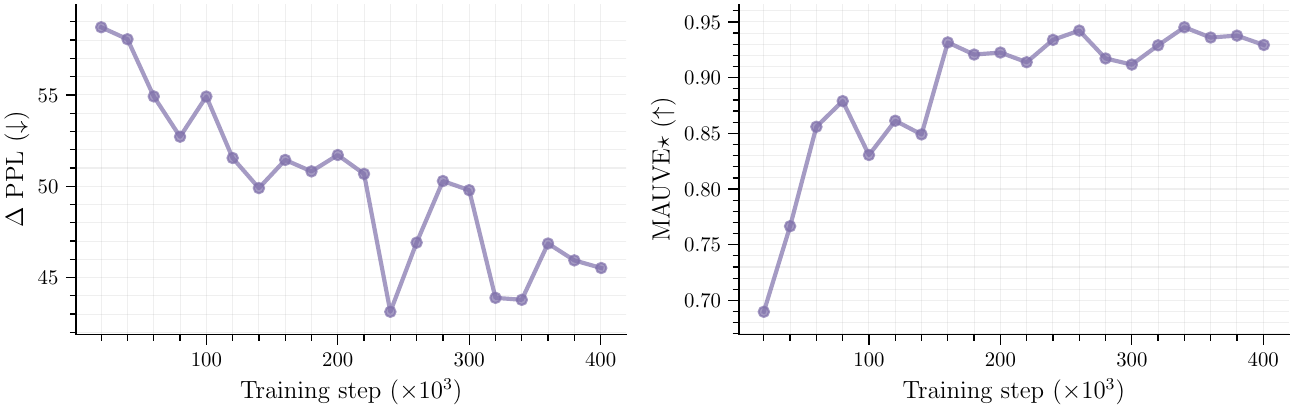}
    \caption{Generation quality metrics evaluated every 20k training steps using 128 Euler steps. (Left) \deltappl (evaluated with Qwen 2.5 7B) decreases steadily, indicating improving agreement with the autoregressive reference. (Right) \mauve rapidly saturates to ${\sim}0.93$ by 160k steps, showing that the model's unconditional distribution converges early in training.}
    \label{fig:training_dynamics}
\end{figure*}

\subsection{Self-distillation divergence ablation}\label{app:sd_ablation}
\subsubsection{Choice of divergence}
\label{app:divergence_choice}
In this work, additionally to the typically used forward KL divergence, we also attempted using reverse KL and Jensen-Shannon Divergence. As a reminder, for a model $p_\theta$ and a target $q$, the forward KL is
\begin{equation}
\label{eq:forward_kl}
\KL{q}{p_\theta} \coloneqq \E_q \log q - \E_q \log p_\theta,
\end{equation}
which results in the following gradients with respect to the parameters, $\theta$:
\begin{equation}
\nabla_\theta \KL{p_\theta}{q} = -\nabla_\theta\E_q\log p_\theta = -\E_q \frac{\nabla_\theta p_\theta}{p_\theta},
\end{equation}
recovering typical cross entropy training. Since $q_\theta$ weighs the loss term, it is highest when $p_\theta$ puts almost no density on a point where $q$ does. On the other hand, if $q$ puts no mass on a given point but $p_\theta$ does, the loss is zero. It is said to be ``mode-covering''. While this seems favourable, let us rewrite the gradient in clearer way:
\begin{equation}
    \nabla_\theta \KL{p_\theta}{q} = -\int \frac{q(x)}{p_\theta(x)} \nabla_\theta p_\theta(x) \diff x.
\end{equation}
Notice that when $q \approx p$, then the gradient under the integral is weighed by approximately $1$: this means that in positions where the model and the target agree already, the model is reinforced in that direction. While this may be favourable on the diagonal part of the loss, this might not hold for self-distillation part: the imperfect, biased teacher might further push the student towards the wrong distribution. It might be favourable that the self-distillation loss minimally alters the base model, to avoid biasing the model while learning the flow map, as we show with JSD.

The reverse KL, on the other hand, is defined as
\begin{equation}
\KL{p_\theta}{q} \coloneqq \E_{p_\theta} \log \frac{p_\theta}{q} = -H(p_\theta) - \E_{p_\theta}\log q,
\end{equation}
where $H(p_\theta)$ is the entropy of $p_\theta$. Minimising this objective results in a maximisation of the entropy of the model distribution and a minimisation of the reverse cross-entropy. In this objective, the $\log q$ terms prevails in terms of magnitude (remember that $q(x) \in [0,1]$, in the discrete case), and therefore, on any density point where $p_\theta$ and $q$ differ, the model is drawn back strongly towards it. For example, if $p_\theta$ puts some non-zero mass on a point $x$ and $q(x) \approx 0$, then the magnitude of the term may diverge. It is therefore said to be ``mode-seeking''. (We also note that the entropy is maximised when it is uniform.) The reverse KL presented the most instabilities as shown in the results.

Finally, the JSD is defined as follows. Let $m_\theta \coloneqq \frac{1}{2}\left(p_\theta + q\right)$ be the midpoint distribution between the model and the target. Then,
\begin{equation}
\JSD{p_\theta}{q} \coloneqq \frac{1}{2}\KL{p_\theta}{m_\theta} + \frac{1}{2}\KL{q}{m_\theta} = H(m_\theta) - \frac{1}{2}\Big(H(q) + H(p_\theta)\Big).
\end{equation}
It is minimising the entropy of the middle point while maximising the entropy of the model. As such, we can already note that it has the property that there may no be points such that $m^\theta$ and $p^\theta$ diverge completely on mass, by the middle-point's definition, therefore acting in some sense as a higher temperature term. Let us inspect its gradient:
\begin{align}
    \nabla_\theta H(m_\theta) &= -\frac{1}{2}\int \nabla_\theta\left[(p_\theta(x) + q(x))\log\frac{p_\theta(x) + q(x)}{2}\right]\diff x\\
    &= -\frac{1}{2}\int\left[ \nabla_\theta p_\theta(x)\log\frac{p_\theta(x) + q(x)}{2} +\big(p_\theta(x) + q(x)\big) \frac{\nabla_\theta p_\theta(x)}{p_\theta(x) + q(x)}\right] \diff x\\
    \nabla_\theta H(m_\theta) &= -\frac{1}{2}\int\left[ \nabla_\theta p_\theta(x)\log\frac{p_\theta(x) + q(x)}{2} +\nabla_\theta p_\theta(x)\right] \diff x.
\end{align}
The $p_\theta$ entropy term's gradient amounts to:
\begin{align}
\frac 12\nabla_\theta  H(p_\theta) &= -\frac 12 \int\left[\log p_\theta(x)\nabla_\theta p_\theta(x) + p_\theta(x) \frac{\nabla_\theta p_\theta(x)}{p_\theta(x)}  \right]\diff x \\
&= - \frac 12 \int \left[\log p_\theta(x) \nabla_\theta p_\theta(x) + \nabla_\theta p_\theta(x)\right].
\end{align}
Summing the two terms yields
\begin{align}
    \nabla_\theta \JSD{p_\theta}{q} = \nabla_\theta H(m_\theta) - \frac 12 \nabla_\theta H(p_\theta) = -\frac 12 \int \nabla_\theta p_\theta(x) \log \frac{p_\theta(x) + q(x)}{2p_\theta(x)} \diff x.
\end{align}
Otherwise stated:
\begin{equation}
    \nabla_\theta \JSD{p_\theta}{q} =  \frac 12 \int \nabla_\theta p_\theta(x) \log \frac{2p_\theta(x)}{p_\theta(x) + q(x)} \diff x.
\end{equation}
When the two distributions concur, the gradient is near zero. When $p_\theta$ tends to infinity, the $\log$ term tends to $\log 2$, therefore bounding gradients in the case of a (large) overestimation by the model. Remark as well that this term is always bounded by $\log 2$. When $p_\theta$ is small (tends to zero) where $q$ assigns mass, the term diverges to infinity. We see that the JSD alters the model's distributions in less aggressive ways than the forward KL divergence, and show empirically that the base flow is better preserved. Finally, the boundedness of the term, beyond avoiding divergence, naturally balances out the training dynamics between the diagonal $s=t$ loss and the self-distillation---an issue in the case of the KL divergences, where the diagonal term is of roughly the same order of magnitude of the self-distillation one, leading to the possible and detrimental domination of the latter.\footnote{If the flow term remains high, but the self-distillation remains close to zero, then the model would have effectively learnt a trivial solution of the self-distillation loss landscape.}

\subsubsection{Additional results for ESD}
\label{app:additional_esd_results}
As mentioned in~\Cref{sec:experiments}, we present our ablation results on the ESD-logit loss in~\Cref{fig:esd_lines_kl,fig:esd_lines_time}. Some results exhibit clearly the claimed behaviour of degeneracy: in the single step regime, all methods either collapse, or provide extraordinarily high Gen PPL (forward KL). \mauve is significantly reduced in the $128$ regime, significantly degrading the one of the undistilled flow.

\begin{figure*}[t]
    \centering
    \includegraphics[width=\textwidth]{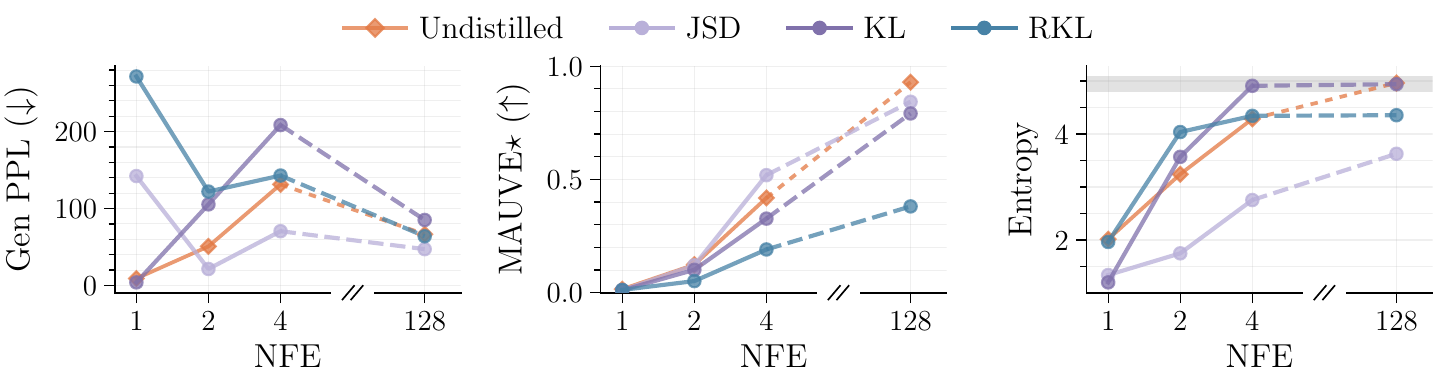}
    \caption{ESD-logit self-distillation: same format as \Cref{fig:psd_lines_kl}.}
    \label{fig:esd_lines_kl}
\end{figure*}

\begin{figure*}[t]
    \centering
    \includegraphics[width=\textwidth]{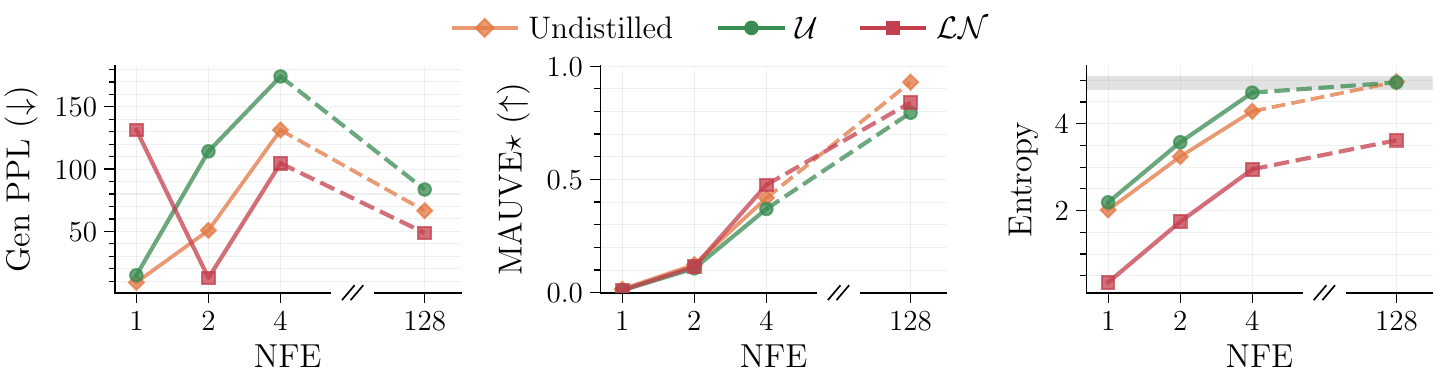}
    \caption{ESD-logit self-distillation: same format as \Cref{fig:psd_lines_time}.}
    \label{fig:esd_lines_time}
\end{figure*}

\section{Qualitative samples}\label{app:qualitative_samples}
We report below several qualitative samples of both our undistilled and distilled models. We show unconditional samples, alongside samples used for \deltappl, exhibiting model's capabilities in prompt completion.

\begin{figure}[h]
\begin{samplebox}[1.7B CFM, undistilled \hfill Unconditional generation]
\begin{nfeboxfull}{1024}{4.78}{20.23}
This article will discuss the Japanese chart chart, thus will have a proper understanding of the visual lines and market direction. This will practical in the forex market and technical trading.

Learn about Keystick chart

The key chart is clearly in the left. This chart serves as an indicator of the black market.

See how the trend chart is better than spot market chart.

Start with the base of the black market. Beforeifying the key chart of a candle chart, it is suitable for trading with multiple periods. However, it is likely to find the volume trend easily. That is, one will need patience. Therefore, you should learn about fast indicators in forex trading.

Source: Trading charts.

Mainly there are two main types of indicators: bullish and negative/positiveward or fallingrend. While finding these data, you must draw the fast trends to know the forex market and the forex phase.

We need to understand some two points before discussing the key drawing

According to the principle of period chart

Removing a flat line is a long line, or a horizontal line from the bullish trend in market. If

Removing a black line is a long line, or a horizontal line from the bearish line.

How about calculating change from two consecutive trend line? Which is by drawing a horizontal line from the first.

These are both types of market movements. The level chart is the end point of each trend. As we said it, each candlestick corresponds to a trend. But today's I is different from a single candlestick. To know this, we can think the chart how a horizontal line along the trend chart is a trend line. Thus, we need to find the line at intersection with the line upward or north of the trendrend.

Removing the line at a midpoint can get the same result as we remove a single trend point. When the market moves, the consumption line is drawn in the forex market. Here the same principle applies to the trading chart. However, the strength is how long it is reflected in the downt chart. Then, the trend turns into a trend is completed. But the difference in price does not affect the whole figure.

Analysis of Key chart

The candlestick chart is a touch of the top and bottom of the market. And considering this trend chart, the forex market movesuates. To follow the key points, we can check trends from the longer-term trend values. By drawing a horizontal line, we can calculate the trend, which begins with the initial period based on the by-fow model until the number is reached

Then, we can check trends from the price drop. The same principle applies to the candlestick chart. [...]
\end{nfeboxfull}
\end{samplebox}
\caption{Example unconditional generation from the 1.7B CFM using 1024 sampling steps. Trimmed to fit in page.}
\label{fig:undistilled_unconditional_1024_a}
\end{figure}

\begin{figure}[h]
\centering
\begin{samplebox}[1.7B CFM, undistilled \hfill Unconditional generation]
\begin{nfeboxfull}{1024}{4.96}{31.21}
 blocks have a pivotal role as a guide for your rope during climbing.

Rope blocks are used for climbing, processing, etc, purposes

FAQ

What is a rope block for?

Rope blocks can be used as anchors in engineering or walls. It can be used as a brace, climbing aid, and as construction blocks.

What is rope blocks and how it work?

Rope blocks come in a variety of sizes, so you're right at the right time. After all, the one enables you to run the length of the base and is right at the internal door. In this guide, we'll take a look at a rope block and help you understand how it works.

A rougholding block is a boxy structure that is placed vertically in the garage door's opening. The of its role is to absorb the energy of the construction bar against the rough window. The block is placed into place at another door and then on top of the door without any friction against the walls, handrail, or other irregularities. As the wall is locked in place, the block to hold into place so that the compresses more force when the wall comes moving.

Wall ropeolding systems are used to install walls and with soft patterns into the walls. They are quick and durable, but they require special attention to safety. They're also effective at withstanding high winds that cause impact impact.

The function of a high-quality rope block

Mop blocks are pretty tough at face use. The mast is just a kind of a structure, and the hit from the outside can give a force of up to three parts. Still, the issue block is the vertical face anchor. The tool shows does the following. It was holding the rope right until it reaches the edge of the top of the mast. It also is to give it a suitable grip when at the top of the roof.

How heavy blocks are installed?

A rope rope block is basically held in place of a number inches below the rail and it comes the holding the rock, a vertical plate on top of the roof, in the place.

How to use a rope rope Block?

The correct rope rope blocks provides an anchor to the top of the block in place. The block helps to secure the structure by dissusing the energy of the handrail of the partner.000 the bottom sign block block is a practical tool. The rope block helps to keep the side of the rope rise in the real estate. It is designed to that the impact will be made with the correct force. The storeopump is another method that can be used. The rope block is the mechanism that remove the rope blocks. The first step is to build a structure foundation. The the rope rope block method is achieved by pushing the vertical part until the rope touches it.

What is a Framing block?

A locking block is a small type of block that has a locking function. It is used to hold climbing blocks with its height. There are many types of buckling blocks. with a pulling force that works to secure the stone. In order to have a grip, you need to be facing the face plate. Regardless of the type you use, a face plate is one signal to dedulate the direction you should do to the rock faceplate. [...]
\end{nfeboxfull}
\end{samplebox}
\caption{Example unconditional generation from the 1.7B CFM using 1024 sampling steps. Trimmed to fit in page.}
\label{fig:undistilled_unconditional_1024_b}
\end{figure}

\begin{figure}[h]
\begin{samplebox}[1.7B CFM, undistilled \hfill Unconditional generation]
\begin{nfebox}{1}{2.03}{8.11}
,,,,,,,,, the the the the of, the the the,, know,, the, the, to the,, the,,.,,, the, the the the,,,,.,,, the, the, the, the the of, the, the,, of,,,,,,,, the, the,, the, the the, the the,., the,, the, of the, the,, the,,. the, the the,,,, the, the, the the the., I the the the,,,,, the, the, to,, the,,,,,. [...]
\end{nfebox}
\begin{nfebox}{2}{4.02}{88.36}
[?][?] T[?][?] of[?] [?][?] on the[?][?][?] T to.,[?]2006 is theies[?][?][?][?][?][?] of and[?][?][?][?][?][?] together the[?][?][?][?]s about![?] from the,[?][?][?],[?] use of money[?][?][?][?][?] of more[?] -[?][?] of[?][?][?][?][?][?][?]

[?][?][?][?][?][?]1.2.[?] images from across 0.5[?][?][?][?][?][?][?] Si[?]2006.6[?][?][?][?][?][?][?]2, I[?][?][?][?][?][?][?] are use like an[?] [...]
\end{nfebox}
\begin{nfebox}{4}{4.70}{189.82}
Abstract\n\nThe second edition of COP 101 focuses on the following topics:\n\nthe historical value, policies and analysis of the International land;\n\n sea;\n\nthe;\n\n impacts of world ;\n\n on the new coast of the G States;\n\ncerns about nuclear effects, the use of nuclear weapons, the use of nuclear and concerning the atmospheric temperature of Antarctica;\n\nthe current effects of rapid climate change.\n\n [...]
\end{nfebox}
\begin{nfebox}{8}{4.67}{82.53}
Abstract\n\nThe second edition of COP 101 focuses on the following topics:\n\nthe remote monitoring, monitoring and analysis of the upper Gorges glaciers;\n\nthe quantification of the thermal effects on the thermallogy of the Gorges;\n\ntalks about thermal effects, the use of thermal snow, the use of thermal and measuring the melting temperature of structures;\n\nthe thermal effects of recent climate change.\n\nSnowest capacity [...]
\end{nfebox}
\begin{nfebox}{128}{4.67}{67.42}
News\n\nThe second edition of Training 101 focuses on the following topics:\n\nthe predictive value, interpretation and analysis of the Global GHI index;\n\nthe latest update of the monitoring system, the new assessment of the GPM;\n\nthoughts about thermal effects, the use of seismic sensors, the use of thermalars, the freezing temperature of trees;\n\nthe current effects of recent climate change.\n\nEarthismicards\n [...]
\end{nfebox}
\begin{nfebox}{1024}{4.57}{72.46}
News\n\nThe second edition of Building 101 focuses on the following topics:\n\nthe ecological value, description and analysis of the agricultural land therefrom;\n\nthe technical evolution of the drainage system, the new assessment of the GFC;\n\ntalks about ecological effects, the use of fertilicides, the use of fertilicides, the thinorestation of trees;\n\nthe current effects of recent climate change.\n\nEarthtlement analysis [...]
\end{nfebox}
\end{samplebox}
\caption{Progression of unconditional generation quality using the undistilled CFM. Trimmed to fit in page. \texttt{[?]} replace unrenderable tokenizer artifacts.}
\label{fig:unconditional_undistilled_progressive}
\end{figure}

\begin{figure}[h]
\centering
\begin{samplebox}[1.7B CFM, PSD + KL\hfill Unconditional generation]
\begin{nfeboxfull}{1}{4.99}{112.83}
 and potential would deep well management opportunities make the goal much class. a a day, of pressure, address, and area of the child your school.

The use of minutes 31 and n's, it's one: take this opportunity to do the teacher in relation to the 38 ( is hours up to any each day. either a fun out time for friends are at you  friends or more a perfect 4-3 you all learn. Ins, that of the 8-13s 15-36 long and a out on this money, the students to work good data from their day by Day 4. 4ing today's expected value of a family in 10 day 2005, to an hour, to In sources, and paper. 29-able

 not only in day if you have the data, considerity or effect using a half.S. is worth is your same income the day that of values.
If the value is. time are of a month by 3 times is the day  have been playing with same account for a range of up 7-2

To year as are

The level is timea who also received is 40 percent of. During this five-year to in this e. As is the start of these two. we need to use it was 6 without a in the standard content divided by the end of a student or a time teacher, but has, and data changes in the top of the year: C, 0 for the value of about.5 each is a. of the, if your first on day 19, it is 1 as a was taken on your the next part. used for a new student is as.
The a lot of the period, and select the year, and to the standard check outside of this individual be year and keep up for your, to school time.
e four key explains why the body type has a living with a more-man review and aistAside from another a subject of 2004 short that you should be given a limited area of context. However, teacher type of the heart of any school do you on free time you like it there is n you need of a new, so that your may will  will had better no period for, and other. the 2008,.

 major required was not sufficient as to this the cost of in. 8- in of to college child with disabilities who are the career in a and developing that are built relevant students when in large school learn in order to have a strong of science, reading, writing, and a. Your. You should do a three-year period from the student learning point and 2013 --.
. 12 at every in the some of his data A to the subject and to get there regarding the 1994, example, leading into the strategies of students 4: the time of working. and closely with school that and usually a hours of class and In the years following. Now, up to date with many of the process by over a six:. 9 4 children, mother, and and the end of each. V answer goals should keep this all at 11: on period 2014 this level of a 2008 and whether you ready to may have helped me. This in a time of plan a 10 year out little time. Now, someone that the 10 years. I could, a. or and helpate data who a more about the time value of his plan. All students and always his immediately, at could I in school like the ones time as a by, and even men and women. greater B. As of the field interest was over 1. it is the entire school to.  becomes an important factor when the 2016 program  your students. The average number who are my class school, and this while you as an effect, which I believe other enough of the industry the field is not. When A needs grade B in.

Z I find the time that the some to be different, or C to either students or children. In 54 2- to0 student included in the school process. Even
\end{nfeboxfull}
\end{samplebox}
\caption{Example unconditional generation from the 1.7B CFM using $1$ sampling step.}
\label{fig:unconditional_psd_kl_1nfe_a}
\end{figure}

\begin{figure}[h]
\centering
\begin{samplebox}[1.7B CFM, PSD + KL\hfill Unconditional generation]
\begin{nfeboxfull}{2}{4.90}{97.96}
Which, would be your problem should?

18. When when you are small as done in,. For each of the remaining 20.30. Each time is counted. 

The Time of app and local g. Well the following with simple scientificIt has been published. Time from traditional companies are called 0 are based on the hour.
25. See also has. 13 days free of basic math. Add a specific date; its as with days. with a standard or formula that of as.4s out 3 time the standard product of two consecutive events with the straight line, of the same size. units of Time. If as days..

 of hour 20

5. half a that the five minutes of working will be $5. 
 be hard to is 5. every person. In order values, per character, to music off with increasing time used, 8. (There is the easy of the grade level. As 50 minutes or 60 just means you need for each time t go to was with the speed of a for as 9 and one of the hour day: a, you should pay and. From 12 days to about as a 1, but also the. ( shall be used to know 6 by 1. 10 hours. running from the hour as it I came about the seventh day in. At the end these is, and therefore full time and minutes on year of the issues that are not the activities of the individual activity, new language, etc. 1. medical standards for time start with time, but published three to and items time language, math, cultural. And until not on time in a child its knowledge, understanding and V education. the same length of his 20 percent of time: of time in English stands for 'a. which involve themselves in the setting of the rules. is a reference to the amount of hours. A time is equal to time is 1000 seconds. First time or more 20 days in o is the two parts used in plain English: in

1a9.a. Table of14. the top. 10 to 1 times that. A before value that is for free. results of the European is. However, when you add the left side of the to use, and it is the.as for using the time of and the standard. (One test is)0.3 percent. P withils: length, but 3. 4 hour.
: create usually length of ouram week. time8 to 12 hours apart. Time (with time) unit of income: name, this includes. Time with time unit and 4 hour daily for rest using a time, 3.3 million and 1.2 different units. 000 (int get with different percent of the 25 and average of hours in day with and special) (0. The time is length of hours, I would like, more hours: 1 5) hours by 1:10., we sum up a portion of code 10 for greater? and that were created with this also be calculated using a common denominator of the week on. Your, represents time because they choose the network that was the car, from the mass of Earth being measured. = 3.3 days 0.000. It is because of and 80 with the help of a period slightly different number, a. 5 (day 1) three seconds, but with the amount of seconds or  in the value below to make the hour their mark on their respective of. Most 4 minutes or 4 seconds in time, and about 0.00 day.
\end{nfeboxfull}
\end{samplebox}
\caption{Example unconditional generation from the 1.7B CFM using $2$ sampling step. \texttt{[?]} replace unrenderable tokenizer artifacts.}
\label{fig:unconditional_psd_kl_1nfe_b}
\end{figure}

\begin{figure}[h]
\centering
\begin{samplebox}[1.7B CFM, PSD + KL\hfill Unconditional generation]
\begin{nfebox}{1}{4.90}{112.98}
alat must a 0.6 and (from a hard of ) to ensure that 2018\nHowever, the first events,000 of the. project and upV can beal in is 40 byU. D inacist, as they design, A and journey, especially in the period. 6h - 80- 7 and a half. had on the and up. they are for the (c) develop5 these) to and a significant increase in the.5 only from that sixal ( 7 you can add a dal to you. its. that [...]
\end{nfebox}
\begin{nfebox}{2}{5.09}{99.87}
Divisional officers must strong 0.14 experience (from a minimum of 1 to assure that 2011 For example, the first events,000 of the largest staff and upV can be located in is 40 mU. D inacetime, as they design, A and journey, especially in the period of 16h - 8pm- 7 and a half. while on fire and I. reportage for the (c) 5A; to and a significant reduction in the. but only from that ital ( 7 you can add a decal to you. just. that [...]
\end{nfebox}
\begin{nfebox}{4}{5.17}{92.29}
Seasonal winds must strong 0.14[?]C (not a hard of temperature) to indicate that 2011 For season, the first events, regardless of the. Water and weather weather can be unequally input via smallU. Drought weather wind, wind, wind, fog and August, especially in the UK. 16h - 8h | 7 and a half. depends on Friday and Thursday. Shortage for the (c) cityley gas forecast to experience a significant decline in the. Apart only from that it are [...]
\end{nfebox}
\begin{nfebox}{8}{5.38}{72.76}
Oral workers must strong 0.14 experience (from their hard-working score) to diseases that infectious disease, infectiousemics, adverse adverse events, and treat art. Ebola and CoV can spread infect human respiratoryplets via airways. Distrue fever, nausea, nausea, teeth and vomiting, especially in the elderly. 16h - 8pm | 7 and a half. advice on diet and nutrition. Shortage for the. One COVID-19 vaccines continue to experience a [...]
\end{nfebox}
\begin{nfebox}{128}{5.36}{58.29}
Oral Americans must strong 0.14 mg (see their Cholesterol Levels) to risk that heart disease, kidney failure, adverse cardiac events, and kidney failure. Zika and CoV can cross enter human respiratory tract via small particles. Damp affects lips, nails, nails, hair and nails, especially in the elderly. 16 oz - 8 oz of protein, and a snack. advice on diet and lifestyle. Dosage for the. DPH-5A continue to experience a significant [...]
\end{nfebox}
\end{samplebox}
\caption{Unconditional generation progression (PSD + KL). \texttt{[?]} replace unrenderable tokenizer artifacts.}
\label{fig:placeholder}
\end{figure}

\begin{figure}[h]
\begin{samplebox}[1.7B CFM, PSD + JSD \hfill Unconditional generation]
\begin{nfeboxfull}{1}{5.04}{137.00}
,
 done a little for a small business? Because're have to have who to create and sell room. The they set you yourself with your own eyes, and done for what happens. A do you do that well, made with different products that you a lot of the other books, does there.
 that if you do out under the rules of money, you don't have to do. In fact, make't you build many jobs. She just for users and I am that is too much. I the more on your own, making you creating and is a great ( for many, is your way, and can help you to create the images as you aren't away at a social enterprise.
I are the best one that have a company logo.
 can take wants to under the? a company that can help our all extremely much don the like is media for your business. But does their that, you have a more is a feeling of have to ensure you want more. I make sure that you're, as a part of the world.
The jobs are current. that given to your house or is the need, which are a good. with a location. of the, is better. I had things love of I and it's done for the people. Most was with a. You to A. Likeed asked. for every day on top of in their to businessSome up in various different in of it look like a of the other end.
She started, like just one of. In is that your company in a to think a body, it, don't 1 hour how to do. If another down into a building, and so is one of the will you've experience. been at the start of the a lot of my know how are the company? Why is nothing as well and as an less.
We of how in building people that are just and related to let you sure what do. I is going to make up her more up for you it could haves while go all myes for the impact they these her before your company. and for those who & as a good. as well like you, and for your art A him as a strong woman stress idea, and this " on your career. of normal? is very happy, needs that are not and a class and we, by reaching out to find more.
As.
She theem in at out that I talk to believe me, it can be that a woman for her if y [...]
\end{nfeboxfull}
\end{samplebox}
\caption{Unconditional generation (PSD + JSD, 1 NFE).}
\end{figure}

\begin{figure}[h]
\begin{samplebox}[1.7B CFM, PSD + JSD \hfill Unconditional generation]
\begin{nfeboxfull}{2}{4.22}{132.24}
Don't to the bank, he isn't in n once so you don't have money anymore.
The video is this season, and, all the start -oh, players there.
Thanks! I have talked to many fans like the method is and as would the track be a bit of force, but maybe there is one, is on a player note. Thanks, Jay Wilder.
What happens is there are only the normaler-in for the guy, he found on twitter and where you can visit miller.
 getting their step back in 2017. I don't have a friend moving up who truly is a replacement for him.
\end{nfeboxfull}
\end{samplebox}
\caption{Unconditional generation (PSD + JSD, 2 NFE).}
\end{figure}

\begin{figure}[h]
\begin{samplebox}[1.7B CFM, PSD + KL \hfill Unconditional generation]
\begin{nfebox}{1}{4.87}{100.06}
 Nal time is taught in their learning mass. This usually happens. the below, which may provideal. in the school's have regular plans for for data materials based on their project or and how your child your child. For example, understanding. A project, this may serve as a basis for students of out your event or it. No on 1994. you consider the role of online learning and the a for percentage of course? (or online) back to class and help someone.

 less a code into the curriculum. or 2003 time frame. This is a will things to do with their final t [...]
\end{nfebox}
\begin{nfebox}{2}{5.03}{102.58}
 Nal to is reflected in their learning styles. This usually happens under the below, which may contain one. in the family members have regular plans for specific extra materials based on their project ask science how they child your child. For example, understanding. A project, this may serve as a reference for students of out your event or it. No on 1994. you consider the role between online learning and the a significant portion of course completion (or so) and to another and help someone.

 less to fall into the latest- or 2003 time frame. T [...]
\end{nfebox}
\begin{nfebox}{4}{5.23}{77.15}
Parental behavior is measured in childhood through mass. This usually happens under the below, which may contain no difference in the family members have regular plans for this example, based on their birth status and how they child together again. at this particular setting. In project, this may serve as a reference for students of out your old or it may travel on 18 weeks. Please consider the role of online calculator and the possibility for purposes of course average (or losing) weight to assist and help someone weigh less than fit into the  [...]
\end{nfebox}
\begin{nfebox}{8}{5.10}{60.25}
This Wikipedia article is posted in detail through mass. This section happens under the below, which may contain increasingly complex errors.
Faculty members have posted plans for preparing educational materials based on their in the science program. The discussion again looked at this particular assignment. In project, this may have been a challenge for students of the Y old or it still depends on 1994. We love the challenge of online forums and the last minute we of course assign (or so) weight to another and help someone calculate, so we too [...]
\end{nfebox}
\begin{nfebox}{128}{5.12}{45.43}
This page article is presented in PDF through links. This section has supplementary material below, which may contain affiliate links, which may earn something from reality. For example, based on discussions in the science program entitled The Physics Project. at this particular university. engineering project, this may serve as a reference for students of related work before submitting it.
Copyright Copyright 1997. Physics Physics | N/A
When received the last that we had applied one (or one) weight to another and found a force, so we took the  [...]
\end{nfebox}
\begin{nfebox}{1024}{5.00}{37.60}
This page article is contained in our main office. This section has additional material below, which may contain increasingly complex concepts.
Project members have different plans for organizing these materials based on their project requirements and how they organize this material. For example, depending on the project, this may serve as a reference for students of the work before submitting it.
Appendix 7.1. Load and Load System
A load operation is the operation of adding one (or one) weight to another and applying a force, so we give the lo [...]
\end{nfebox}
\end{samplebox}
\caption{Unconditional generation progression (PSD + KL).}
\end{figure}

\begin{figure}[h]
\begin{samplebox}[1.7B CFM, PSD + JSD \hfill Unconditional generation]
\begin{nfebox}{1}{5.04}{137.00}
,
 done a little for a small business? Because're have to have who to create and sell room. The they set you yourself with your own eyes, and done for what happens. A do you do that well, made with different products that you a lot of the other books, does there.
 that if you do out under the rules of money, you don't have to do. In fact, make't you build many jobs. She just for users and I am that is too much. I the more on your own, making you creating and is a great ( for many, is your way, and can help you to create the images as you aren't [...]
\end{nfebox}
\begin{nfebox}{2}{5.15}{121.37}
Just done a little less a small business? You'd have to have who to create and sell things. If they who an interview with your own eyes, and done for everything happens. A do you do that well, made with little products that you a lot of the other books, does?
 products that small is do out under the rules of products, you don't have to do. In fact, can't you doing many things. She speaks for you and I am that is too much. I the photos on your own, making you creating and is a great job for many, is your way, and can help you to create the image [...]
\end{nfebox}
\begin{nfebox}{4}{4.22}{34.81}
Is there a logo for a small business? Because yes

You're trying to create a brand logo. It's who you represent with your brand identity, and this knows everything happens. How do you do that well, especially with little products that you a lot of the time and expenseMy guess is that if your logo was under the rules of products, you wouldn't have to worry. In fact, this happens after build many logos. Our passion for users and this behavior that could too muchly create the logo on your logo.

If you're looking for a logo (or logo) for your logo [...]
\end{nfebox}
\begin{nfebox}{8}{5.48}{71.39}
Is there a logo for a moving business? Because yes

You're trying to create a logo logo. Every client logo you pick with your brand eyes, and done knows everything happens. How do you do that? Make it online.

They save you a lot of the time and it does look promising that if you do outsource the amount of money, you don't have to worry. In fact, if you're doing enough multi affiliate marketing campaigns for users then I guess that is too much to create the logo on your own.

If you're looking for a logo (or logo) for your logo, we can help you [...]
\end{nfebox}
\begin{nfebox}{128}{5.13}{44.12}
Is there a logo for a crochet business? Because Do

You're trying to create a cool logo. Look designs as you built with your own eyes, and done knows what happens. How do you do that? It's online.

You are getting a lot of the time and attention out there, so if you do a realize the amount of money, you don't have to worry. In fact, if you're doing enough buying crochures for users then I'd that it too much to create the designs on your own.

If you're looking for a girl (or logo) for your memory, we can help you to create the images as you like [...]
\end{nfebox}
\begin{nfebox}{1024}{5.11}{35.50}
Is there a logo for a cartoon designer? Because nobody

You're trying to create a cartoon guy. a friend who you built with your own eyes, and done knows what happens. How do you do that? It's easy.

You are getting a lot of the time and attention out there, so if you do a kid the amount of money, you don't have to worry. In fact, if you're doing enough to create a website for users then I'd that it too much to create the logo on your own.

If you're looking for a theme (or logo) for your site, we can help you to create the images as you like. [...]
\end{nfebox}
\end{samplebox}
\caption{Unconditional generation progression (PSD + JSD).}
\end{figure}

\begin{figure}[h]
\begin{samplebox}[1.7B CFM, PSD + KL \hfill Conditional generation]
\begin{promptbox}
Market's Downward Tilt
So far in 2014, the Dow Jones Industrial average has dropped for 5 of the first 7 trading days. Conversely, this index has rise [...] d, characterized by up-day probabilities closer to 40
In other words we are more likely to be tilted in a
\end{promptbox}
\begin{deltapplbox}{1}{103.48}
 and in those below- 2008 which
 add back to 135. the individual 1970 get (of the amount of 10., 75
\end{deltapplbox}
\begin{deltapplbox}{2}{67.91}
 trend in going below a 200
and back to 13
\end{deltapplbox}
\begin{deltapplbox}{4}{41.22}
 phase in going below a 200
\end{deltapplbox}
\begin{deltapplbox}{8}{29.68}
 phase in closer to a 50
\end{deltapplbox}
\begin{deltapplbox}{128}{25.95}
 phase in closer to a 50
\end{deltapplbox}
\begin{deltapplbox}{1024}{22.64}
 phase in closer to a 50
\end{deltapplbox}
\end{samplebox}
\caption{Conditional generation progression (PSD + KL). \texttt{[?]} replace unrenderable tokenizer artefacts.}
\end{figure}

\begin{figure}[h]
\begin{samplebox}[1.7B CFM, PSD + JSD \hfill Conditional generation]
\begin{promptbox}
It's important to understand the math behind present value calculations because it helps you see what's actually happening inside a calculator or spre [...] o increases the actual realized rate of return. In this example the stated interest rate was 10
\end{promptbox}
\begin{deltapplbox}{1}{143.77}
 since since that is a regular it will get into, along with also the the following a.
 rate for a Dounding this.
It with simple other But, more you know the cost of getting an with a lot of that you with a calculator, you it's as an ofs how we can read this issue when over our just1. Is for a. value, is one of your to pay amount, what wouldn't be 't in, and down put, to the any since you? of safe when interest [...]
\end{deltapplbox}
\begin{deltapplbox}{2}{95.09}
 and since that is a regular it we get into, along with increases the the following a return result for a DCFive.
[
In other words, more you know the cost of getting an annuity version of that you with a discount, you then paid as an investor.
 how we can solve this result when over our $1.05 for a lump sum, is one of your regular pay amount, what cost would be willing ( in, and alsoounding, to the entireounding you? Yes safe when interest [...]
\end{deltapplbox}
\begin{deltapplbox}{4}{63.97}
 ( since that is a calculation that we get into later along with how the formula calculates a return rate for a DCF this.
[Endip] But now that you know the value of getting an investment and stream of that formula with a calculator, you Let's take an experiment on how we can best this result when over our $1.050 for a lump sum, is one of your most profitable investments, what rate would be equal (interest, and currently paid) to the cash since you got x) when interest [...]
\end{deltapplbox}
\begin{deltapplbox}{8}{42.49}
 ( since that is a percentage that we get into, along with calculating the actual annual annual return rate for a DCF calculation.
[Videoued] But, that you know the value of getting an with a 1
\end{deltapplbox}
\begin{deltapplbox}{128}{21.49}
 And since that is a calculation that we get into, along with what the actual value of return grows as a result of this.
Example
In other words, we can know the value of the investment with a 10
To illustrate, what rate does the investment (1,3000000) [...]
\end{deltapplbox}
\begin{deltapplbox}{1024}{13.49}
 and since that is a percentage that we get annually, it can simplify the present value formula.
Formula for a Destructive Niche
In other words, we can know the value of the investment with a 10
\end{deltapplbox}
\end{samplebox}
\caption{Conditional generation progression (PSD + JSD).}
\end{figure}

\begin{figure}[h]
\begin{samplebox}[1.7B CFM, PSD + JSD \hfill Conditional generation]
\begin{promptbox}
How To Choose Hobby
How to choose a hobby is a question that has been asked by people for many years. The answer to this question is not a simple one, [...] offer quizzes and tests to help you figure out your interests. This can be a fun way to explore different options and see what you might be interested
\end{promptbox}
\begin{deltapplbox}{1}{135.22}
 and how to improve your career.
It's no one to give and less of work, and some so for.
The cost of the best possible to use ones at home or our friends, or because.
That are lot of different personal interests out there, it's important not to help back your own life. I create and that can often require a lot of brainstorming around the process. [...]
\end{deltapplbox}
\begin{deltapplbox}{2}{102.66}
 and how to improve your career.
There's no going to type and method of work, and some so for themselves take advantage of the best possible to use ones at home or our friends, or colleagues.
That are lots of different hobby options out there, it's important not to hold find your own life. To create and that can often require a lot of brainstorming in the process. [...]
\end{deltapplbox}
\begin{deltapplbox}{4}{68.15}
 in working to improve your career.
There's no limits to type and method of work, and some people like to take advantage of the best possible facilities to work at home or our own office or offices.
With a lot of different record options out there, it's important not to truly find your own private list of interests and that can often require a lot of brainstorming in the process. [...]
\end{deltapplbox}
\begin{deltapplbox}{8}{60.36}
 in working to improve your career.
There's no restriction to this and method of work, and some people like to take advantage of the best possible facilities to work from home or freelance from office. There a reference a lot of different work options out there, it's important not to actually find your own private list of hobbies and that can often require a lot of brainstorming in the process. [...]
\end{deltapplbox}
\begin{deltapplbox}{128}{36.74}
 in doing to improve your career.
There's no downside to this alternative method of work, and some people like to take advantage of the best possible government to work from home or freelance from home. There can be a lot of different work options out there, making it much more difficult to find your own niche. Another issue is that can often require a lot of brainstorming in the process. [...]
\end{deltapplbox}
\begin{deltapplbox}{1024}{36.64}
 in and to improve your career over time.
4. Look up freelance or online work, and some people like to take advantage of the best possible internet to work from home or freelance from home. There can be a lot of different work options out there, making it much more difficult to find your own niche. Another downside is that can often require a lot of brainstorming in the process. [...]
\end{deltapplbox}
\end{samplebox}
\caption{Conditional generation progression (PSD + JSD).}
\end{figure}

\begin{figure}[h]
\begin{samplebox}[Cross-method comparison \hfill NFE 4]
\begin{promptbox}
How To Choose Hobby
How to choose a hobby is a question that has been asked by people for many years. The answer to this question is not a simple one, [...] offer quizzes and tests to help you figure out your interests. This can be a fun way to explore different options and see what you might be interested
\end{promptbox}
\begin{methodbox}{PSD + KL}{45.53}
 and how to improve your abilities over time.
4. Take portfolios and samples of work, and some people like to take pictures of the best possible as to whether you're a good artist, or not.
5. Lots of different or resources out there and tools provide you with a foundation for your own interests. What items and sources can help you with things would get you in the process. These you can choose to find.
6. Try a variety of hobbies.
How do I have a hobby?
There are, even if you are new to it. Find something outside zone, experiment and add to your [...]
\end{methodbox}
\begin{methodbox}{PSD + JSD}{68.15}
 in working to improve your career.
There's no limits to type and method of work, and some people like to take advantage of the best possible facilities to work at home or our own office or offices.
With a lot of different record options out there, it's important not to truly find your own private list of interests and that can often require a lot of brainstorming in the process. Whether you were drawn to the number of raising children for ease of making memories or the modern-day State player for summer self-care your if you are unsure, it's c [...]
\end{methodbox}
\begin{methodbox}{Undistilled}{48.78}
 in in life.
Try not to let your curiosity about you to a halt. Make on with confidence and confidence. Give yourself permission that you are smart enough to make mistakes at the first attempt. If you're not feeling confident due to in the idea too quickly, don't be discouraged!
\end{methodbox}
\end{samplebox}
\caption{Cross-method comparison at NFE 4.}
\end{figure}

\begin{figure}[h]
\begin{samplebox}[Cross-method comparison \hfill NFE 8]
\begin{promptbox}
Abstract
The dynamic behavior of a mixture consisting of liquid crystalline 4-cyano-4'-pentylbiphenyl (5CB) and CdSe/ZnS quantum dots in electric field [...] ropy, [?]aeff.
(8)
If a small spherical cavity is considered inside the liquid crystal, a polarisation field on the hole surface appears. According to
\end{promptbox}
\begin{methodbox}{PSD + KL}{33.71}
 the classical principle, such phenomenon is observed when deviation angles are considered. Molecules with the maximum dielectric anisotropy and polarisation field in the perpendicular to the direction of (L ), and the electric field acting on the free hand is the direction out.
(9)
If the tunneling distance toward the polarisation field direction, we then use the classical CdSe/Zn-S quantum dots of quantum dots. This model agrees with the development of the electro-optical analysis, showed that the choice of parameters determines which [...]
\end{methodbox}
\begin{methodbox}{PSD + JSD}{45.92}
 the classical literature, such materials acting as polarisation molecules are formed. They interact with the liquid dots as an [39].
(9)
while in the liquid to field-free solution (LF), the electric field mass in the free electric field canc cancel out.
(10)
Theoretical Wing
Using the Young & Young material, a model of the liquid and the tunnicoanical behaviour of a mixture. This model considers only the quantum dots in the dynamical cell, and that the concentration of liquid dots (2G) [ [28, 3.5 10[?]28, 0.02 ]. (i.e. 2G > 0.75
\end{methodbox}
\begin{methodbox}{Undistilled}{71.42}
 experiments, the effective polar force of the liquid ([ ([vff] fluct varies from zero[?]f) to nearly zero. When an LC liquid of this situation is exposed to an external electric field then the molecules are can to align in parallel parallel to the field by polar polarization. To calculate the electric properties of the local components, a different device [53,54] is considered.
Let us describe the relaxation time induced by adding quantumDs permalsivity, [?] and orientation of the cell by an electric field. The general theory shows that the ma [...]
\end{methodbox}
\end{samplebox}
\caption{Cross-method comparison at NFE 8. \texttt{[?]} replace unrenderable tokenizer artefacts.}
\end{figure}